\newtheorem{theorem}{Theorem}
\newtheorem{corollary}{Corollary}
\newtheorem{lemma}[theorem]{Lemma}
\newtheorem{proposition}{Proposition}
\newtheorem{definition}{Definition}
\title{Brave: Byzantine-Resilient and Privacy-Preserving\\ Peer-to-Peer Federated Learning}
\author{
    Zhangchen Xu\equalcontrib\textsuperscript{\rm 1}, Fengqing Jiang\equalcontrib\textsuperscript{\rm 1}, Luyao Niu\textsuperscript{\rm 1}, Jinyuan Jia\textsuperscript{\rm 2}, Radha Poovendran\textsuperscript{\rm 1}
}
\begin{document}

\maketitle

\begin{abstract}
Federated learning (FL) enables multiple participants to train a global machine learning model without sharing their private training data.
Peer-to-peer (P2P) FL advances existing centralized FL paradigms by eliminating the server that aggregates local models from participants and then updates the global model.
However, P2P FL is vulnerable to (i) \emph{honest-but-curious} participants whose objective is to infer private training data of other participants, and (ii) \emph{Byzantine} participants who can transmit arbitrarily manipulated local models to corrupt the learning process.
P2P FL schemes that simultaneously guarantee Byzantine resilience and preserve privacy have been less studied.
In this paper, we develop \texttt{Brave}, a protocol that ensures \underline{B}yzantine \underline{R}esilience \underline{A}nd pri\underline{V}acy-pr\underline{E}serving property for P2P FL in the presence of both types of adversaries. 
We show that \texttt{Brave} preserves privacy by establishing that any honest-but-curious adversary cannot infer other participants' private data by observing their models.
We further prove that \texttt{Brave} is Byzantine-resilient, which guarantees that all benign participants converge to an identical model that deviates from a global model trained without Byzantine adversaries by a bounded distance.
We evaluate \texttt{Brave} against three state-of-the-art adversaries on a P2P FL for image classification tasks on benchmark datasets CIFAR10 and MNIST.
Our results show that the global model learned with \texttt{Brave} in the presence of adversaries achieves comparable classification accuracy to a global model trained in the absence of any adversary.
\end{abstract}

\section{Introduction}

Federated learning (FL) \cite{mcmahan2017communication,konevcny2016federated} allows multiple participants to collaboratively train a global model while avoiding sharing their private data.
In a classic centralized FL paradigm, a central server coordinates the training process among the distributed participants and aggregates their local models. However, centralized FL is not viable when the server is subject to faults or becomes untrustworthy and/or unavailable \cite{lalitha2019peer}. 

To address the high dependence on the reliability of the server in centralized FL, Peer-to-Peer (P2P) federated learning has been proposed in \cite{lalitha2019peer,roy2019braintorrent}. 
In P2P FL, the global model is updated by leveraging information exchange between individual participants, thereby eliminating the need for a central server. The P2P paradigm presents two significant advantages.
First, it eliminates the potential vulnerabilities associated with a single point of failure. 
Second, it can be deployed for decentralized applications such as vehicle-to-vehicle networks \cite{chen2021bdfl} and Internet-of-Things \cite{stefano2020Federated}.

Due to the absence of central coordination in the P2P setting, the behaviors of participants are unpredictable and uncontrollable. 
An unauthorized agent can participate in P2P FL aiming to obtain sensitive data and/ or disrupt the training process. 
Since local models are shared among participants, honest-but-curious participants may be able to infer private training data by launching a membership inference attack \cite{nasr2019comprehensive}.
Therefore, it is crucial to implement privacy-preserving techniques such as secure Multi-Party Computation (MPC) \cite{yao1982protocols} for model aggregation in P2P settings.
Besides honest-but-curious participants, Byzantine participants can send inconsistent models to different participants in P2P networks, inject biased models, or even withhold its updates \cite{tolpegin2020data-labelflipping, xie2019zeno-bitflipping, fang2020local-gaussian}.
As a consequence, Byzantine participants can degrade the global model or even cause the FL to fail.
Therefore, Byzantine-resilient model aggregation rules are needed to defend against Byzantine attacks in P2P FL. 

The current Byzantine-resilient model aggregation rules \cite{blanchard2017machine,yin2018byzantine,li2019rsa}, however, are not readily compatible with privacy-preserving techniques such as MPC.
These model aggregation rules \cite{blanchard2017machine,yin2018byzantine,li2019rsa} eliminate the compromised models from adversaries by disclosing each individual participant's information. 
Consequently, the honest-but-curious adversaries can infer private training data owned by the other participants, leveraging such information disclosure.
At present, however, P2P FL schemes that simultaneously guarantee Byzantine resilience and preserve privacy have been less investigated.

In this paper, we design a P2P FL scheme that achieves Byzantine resilience while preserving privacy.
We consider the presence of both honest-but-curious and Byzantine participants, and define three properties: \textit{information-theoretic privacy}, \textit{$\epsilon$-convergence}, and \textit{agreement}. Information-theoretic privacy ensures that no information about the participants' local models is leaked during the training process, $\epsilon$-convergence implies that the distance between global models learned with and without Byzantine participants is at most $\epsilon$, and agreement indicates the global model of all benign participants are identical.
We guarantee information-theoretic privacy by first letting each participant make a commitment of its local model, which will be `locked' and thus not editable in the future.
We then utilize MPC technique to compare and sort the participants' local models without disclosing their true values.
Each participant then invokes a trimming scheme to exclude the largest and smallest $f$ local models when updating the global model, where $f$ is the maximum number of Byzantine participants.
The main contributions of our paper are summarized as follows.

\begin{itemize}
    \item We propose \texttt{Brave}, a \underline{B}yzantine \underline{R}esilience \underline{A}nd pri\underline{V}acy-pr\underline{E}serving protocol for P2P FL.
    We prove that \texttt{Brave} ensures the local model of each participant to be information-theoretically private during the learning process.
    \item We design a privacy-preserving trimming scheme to ensure $\epsilon$-convergence in the presence of Byzantine adversaries.
    We further leverage distributed consensus to ensure agreement. We theoretically prove that \texttt{Brave} is resilient to Byzantine participants given $N>3f+2$, where $N$ is the total number of participants. 
    \item We evaluate \texttt{Brave} against three state-of-the-art adversaries on two image classification tasks. Our results show that \texttt{Brave} guarantees $\epsilon$-convergence if $N>3f+2$ holds. Furthermore, the global model trained using P2P FL that implements \texttt{Brave} achieves comparable classification accuracy to a global model learned in the absence of any adversary.
\end{itemize}

The rest of the paper is organized as follows. We present the literature review in Section \ref{sec: related work}. In Section \ref{sec: system model}, we present the system model and problem formulation. 
Section \ref{sec: protocol} describes \texttt{Brave}, a multi-stage protocol in P2P FL that is information-theoretic privacy and Byzantine resilience.
Section \ref{sec: performance evaluation} shows empirical results of \texttt{Brave} for image classification tasks on benchmark datasets including CIFAR10 and MNIST. Section \ref{sec: conclusion} concludes this paper and discusses future work.

\section{Related Work}\label{sec: related work}

In this section, we first review Byzantine-resilient and privacy-preserving solutions developed for centralized FL. We then present P2P FL schemes that guarantee Byzantine resilience and/ or preserve privacy.

\textit{Solutions in Centralized Federated Learning:}
To preserve privacy in FL, differential privacy (DP) \cite{dwork2008differential} based mechanisms were widely used to protect model parameters \cite{truex2019hybrid,wei2020federated}.
However, \cite{hitaj2017deep} pointed out that DP is not adequate for safeguarding privacy in collaborative machine learning. Additionally, DP factors could result in reduction of model accuracy \cite{bagdasaryan2019differential}. 
An alternative secure multi-party computation (MPC) \cite{yao1982protocols} based approach was adopted in \cite{bonawitz2017practical}, which presented a model aggregation method for FL using a pairwise mask to cloak the local model. 
In \cite{mugunthan2019smpai}, DP and MPC were simultaneously applied to further strengthen the privacy-preserving property.

Multiple Byzantine-resilient model aggregation rules were proposed to address the presence of Byzantine adversaries in FL.
Multi-Krum \cite{blanchard2017machine} updated the global model by selecting the gradient that minimizes the sum of squared distances to its $N-f$ closest gradients.
Similarly, \cite{yin2018byzantine} proposed two coordinate-wise model aggregation rules, trimmed mean and median. 
In trimmed mean (resp. median), the server sorted the value of each model, removed the largest and smallest $f$ values, and then computed the mean (resp. median) of the remaining values for the global model update.

To simultaneously preserve privacy and guarantee Byzantine resilience, DP and Byzantine-resilient model aggregation techniques were adopted in \cite{ma2022privacy,ma2022differentially}.
The authors of \cite{he2020secure,liu2021scalable} implemented Multi-Krum on two non-colluding honest-but-curious servers for secure aggregation, and applied private calculation to ensure privacy.
It is worth noting that \cite{liu2021scalable} implemented median as the Byzantine-resilience aggregation rule, where local models from all participants were sorted using MPC.
In \cite{velicheti2021secure}, a clustered FL framework that randomly clusters clients before filtering malicious updates was proposed, which cryptographically preserved the local model privacy and was robust to Byzantine adversaries. 
A very recent work \cite{mohamad2023sok} discussed the potential challenges in secure aggregation based on cryptographic schemes for FL.

\textit{Solutions in P2P Federated Learning:}
There are two main categories in realizing P2P FL. The first category adopted a fully decentralized framework where no coordinator existed.
In \cite{lalitha2019peer}, the participants aggregated data and updated their model by observing their one-hop neighbors. 
In \cite{roy2019braintorrent}, the participant who intended to renew its local model actively initiated an update by requesting the latest model from other participants.
Another category leveraged blockchain \cite{Li2021Blockchain,ramanan2020baffle,chen2021bdfl,shayan2021biscotti} or distributed consensus algorithms \cite{che2021decentralized,han2022defl} to synchronize the FL process and coordinate the model aggregation. 
For example, \cite{chen2021bdfl} proposed a blockchain-based P2P FL, where the public verifiable secret sharing (PVSS) scheme was implemented to preserve privacy. 
In \cite{Li2021Blockchain}, blockchain was used for global model storage and the local model update exchange, and a committee consensus mechanism was proposed to reduce the amount of consensus computing. 
In \cite{ramanan2020baffle}, smart contract was applied to coordinate the round delineation, model aggregation, and update tasks in FL. 
In \cite{che2021decentralized,han2022defl}, two distributed consensus algorithms, PBFT \cite{castro1999practical} and Hotstuff \cite{yin2019hotstuff}, were applied to coordinate the FL training process, and Multi-Krum was implemented to enable Byzantine-resilience model aggregation. 
However, these solutions did not consider privacy-preserving and Byzantine resilience simultaneously.  

Currently, Byzantine-resilient and privacy-preserving P2P FL has been less studied. The authors of \cite{shayan2021biscotti} simultaneously considered privacy-preserving property and Byzantine resilience in P2P FL.
Blockchain and PVSS were used in \cite{shayan2021biscotti} to ensure differential privacy, whereas Multi-Krum was applied to defend against Byzantine clients.
The approach in \cite{shayan2021biscotti} may inherit the performance degradation with respect to DP factors \cite{hitaj2017deep}.
Different from \cite{shayan2021biscotti}, \texttt{Brave} is information-theoretically private. 
We further demonstrate that \texttt{Brave} does not degrade the performance of global model trained by P2P FL.
Therefore, the present paper is complementary to \cite{shayan2021biscotti}.

\section{System Model and Problem Formulation}
\label{sec: system model}

\textbf{System Model.} We consider a P2P FL \cite{lalitha2019peer,roy2019braintorrent} consisting of a set of participants $\mathbb{P}=\{P_1,\ldots,P_N\}$ who aim to learn a global model ${w}$. Each participant $P_i$ owns a certain amount of private data $\mathcal{D}_i$. At iteration $t$ of P2P FL, each participant $P_i$ updates its local model ${w}_i(t)$ using gradient descent as 
\begin{equation}
\setlength{\abovedisplayskip}{3pt}
\setlength{\belowdisplayskip}{3pt}
	{w}_i(t+1)={w}(t)-\eta  {g}_i(t),
	\label{equation weight update}
\end{equation}
where ${g}_i(t) = \frac{\partial \mathcal{L}(\mathcal{D}_i;{w}(t))}{\partial {w}(t)}$ is the gradient, $\mathcal{L}(\mathcal{D}_i;{w}(t))$ is the loss function of $P_i$, and $\eta$ is the learning rate.
Participant $P_i$ then receives local models ${w}_j(t+1)$ from other participants $j\neq i$, and updates the global model as 
\begin{equation}
\setlength{\abovedisplayskip}{3pt}
\setlength{\belowdisplayskip}{3pt}
	{w}(t+1)=\frac{1}{N} \sum_{i=1}^N {w}_i(t+1).
	\label{equation directly average weight}
\end{equation}
Such a procedure repeats until a stopping criterion is met.

\textbf{Threat Model.} We consider a P2P FL framework where both \textit{passive} and \textit{Byzantine} adversaries exist and potentially overlap, with the remaining participants identified as \textit{benign}.
The adversaries are assumed to have full access to the messages they receive but are incapable of eavesdropping or intercepting the communications of others. Furthermore, these adversaries have limited computational capability to solve the discrete logarithms problem \cite{mccurley1990discrete}. This assumption is the foundation for the security of numerous public key systems and protocols \cite{lee1998efficient,joux2014past}. 
Specifically, \textbf{Passive adversaries} \cite{nasr2019comprehensive} follow the procedure of P2P FL but aim to obtain the local models from the other participants, thereby extrapolating private training data by launching MIA \cite{nasr2019comprehensive} on these local models.
\textbf{Byzantine adversaries} aim at compromising the learning performance of P2P FL by biasing the local models of other participants. In pursuit of their objectives, the Byzantine adversaries can create compromised local models, and send different local models to different participants or just remain silent in the communication process. 

\textbf{Problem Formulation.} 
Our aim of this paper is to develop a protocol such that the P2P FL guarantees both \textit{information-theoretic privacy} and \textit{Byzantine resilience}, defined as below.

\begin{definition}[Information-Theoretic Privacy]
A benign participant $P_i$'s local model ${w}_i$ is information-theoretically private if its local model $w_i$ cannot be revealed by a passive adversary by observing the message sent by $P_i$.
\end{definition}
 
\begin{definition}[Byzantine Resilience]\label{def:byzantine resilience}
    A P2P FL scheme is Byzantine-resilient if the following properties hold: 1) \emph{$\epsilon$-Convergence}:  There exists $\epsilon\geq 0$ such that $\|{w}(T)-{w}(T)^*\|\leq \epsilon$ holds with probability at least $\zeta$, where ${w}(T)^*$ is the global model obtained using Eqn. \eqref{equation weight update} and \eqref{equation directly average weight} without Byzantine adversaries, and $T$ is the iteration index when P2P FL terminates. 2) \emph{Agreement}: The global model $w(t)$ of all benign participants is identical for each iteration $t$.
\end{definition}

\textit{Remark.} Ensuring information-theoretic privacy and Byzantine resilience implies the protocol is resilient to both passive adversaries and Byzantine adversaries.

\section{Design of \texttt{Brave}}
\label{sec: protocol}
We designed \texttt{Brave}, a multi-stage protocol in P2P FL that comprises four stages: \textit{Commitment}, \textit{Privacy-preserving Comparison}, \textit{Sorting \& Trimming}, and \textit{Aggregation \& Verification}. The proposed protocol is proven to achieve information-theoretic privacy and Byzantine resilience if $N>3f+2$. 

\begin{figure}[htbp]
  \centering 
  \includegraphics[width=0.46\textwidth]{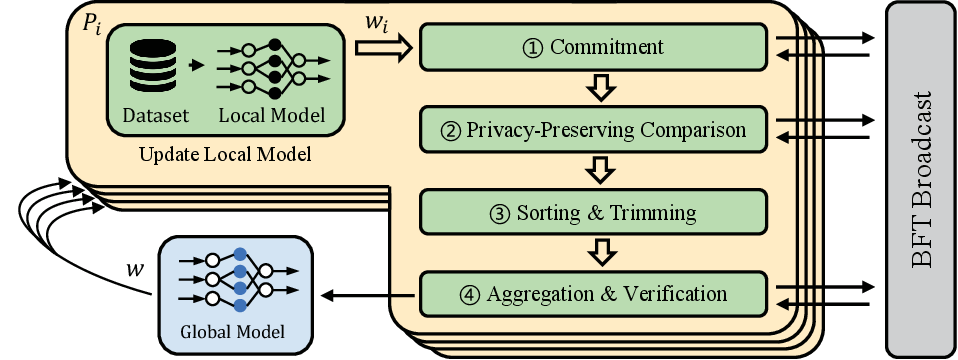}
  \caption{This figure shows the overall workflow of \texttt{Brave}.}
  \label{fig: algorithm flow} 
\end{figure}

To defend against Byzantine adversaries who might input differing models at various stages, \texttt{Brave} starts with a commitment stage (\ding{192}) after participants update their local models in each iteration. Following this, \texttt{Brave} enters a privacy-preserving comparison stage (\ding{193}), which enables the sorting of local models without revealing any information about their true values. \texttt{Brave} then incorporate trimmed mean \cite{yin2018byzantine} in the sorting \& trimming stage (\ding{194}) to remove the outliers possibly introduced by Byzantine adversaries. In the last aggregation \& verification stage (\ding{195}), MPC \cite{bonawitz2017practical} is performed to aggregate models while preserving privacy. This stage also verifies the consistency of the aggregated model with the commitment. Fig. \ref{fig: communication structure} illustrates the schematic message flow of \texttt{Brave} when applied to a P2P FL with six participants.

\textbf{Notations.} We use $\mathbb{Z}_q$ to denote the set of non-negative integers that are no larger than $q$.
We denote $\mathbb{Z}_q^m$ as the set of $m$-dimensional vectors with each entry taking values from $\mathbb{Z}_q$. 
We denote set $\{1,\ldots,m\}$ as $[m]$.

\begin{figure*}[htbp]
  \centering 
  \includegraphics[width=0.94\textwidth]{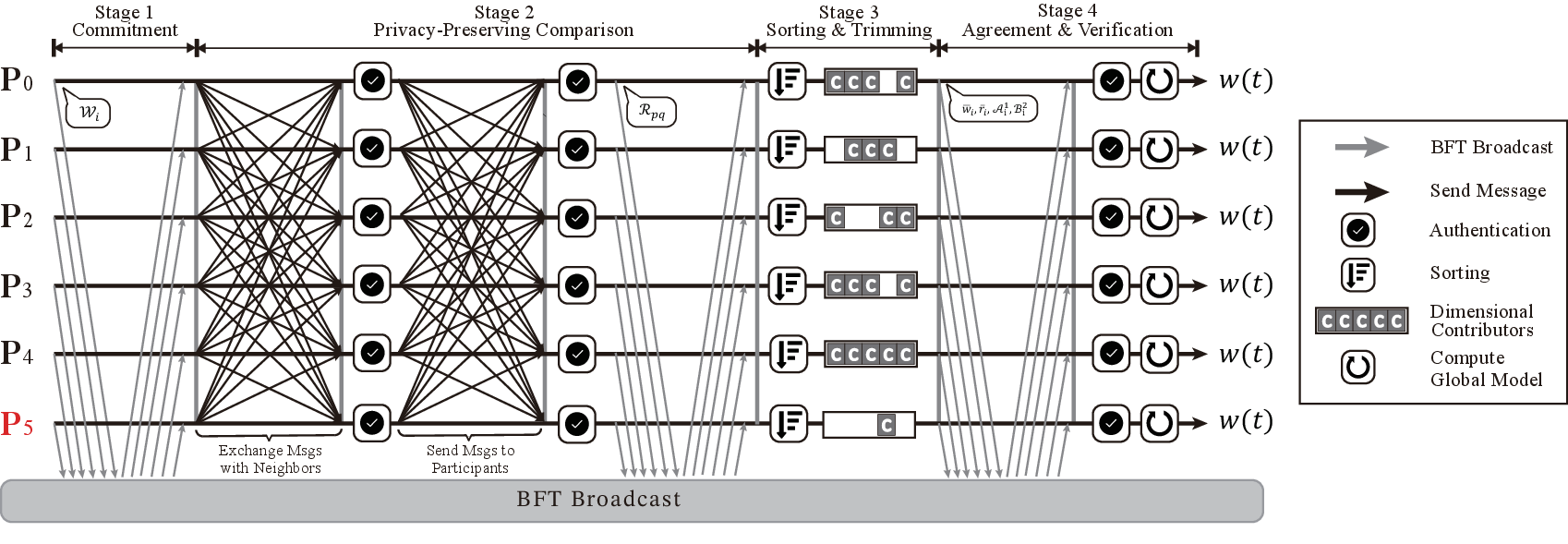}
  \caption{This figure depicts a P2P FL with six participants $\{P_0,\ldots,P_5\}$ and illustrates 
   the schematic message flow of \texttt{Brave}. In this example, $P_5$ is a Byzantine adversary. In Stage 1, all participants broadcast their commitments of local models to BFT broadcast module. In Stage 2, the participants first exchange messages among themselves using the point-to-point communication network. After that, they send a set of relationships among local models to BFT broadcast. In Stage 3, the participant sorts the local models along each coordinate after receiving the pairwise comparison results. It then trims the largest and smallest $f$ values in each coordinate. In Stage 4, MPC is is performed to aggregate models while preserving privacy.}
  \label{fig: communication structure} 
\end{figure*}

\subsection{Stage 1: Commitment}

In what follows, we introduce the Commitment stage of \texttt{Brave}. In this stage, each $P_i$ first updates its local model ${w}_i$ using Eqn. \eqref{equation weight update}. 
Then, a one-to-one strictly monotonic mapping is performed to map floating point representation of ${w}_i$ to $\mathbb{Z}_q$ with $q$ being a large number, and therefore, ${w}_i \in \mathbb{Z}_q^m$. 
If not specified, we assume all ${w}_i \in \mathbb{Z}_q^m$ when the context is clear. The protocol then generates a private random helper ${r^w_{i}}\in\mathbb{Z}^m_q$.
Given local model ${w}_i$ and random helper ${r^w_{i}}$, $P_i$ next generates a \textit{commitment} of ${w}_i$ as $\mathcal{W}_{i}=\mathcal{C}({w_i},{r^w_{i}})$, where $\mathcal{C}$ represents the commitment operation (we defer the detailed introduction on commitment later). 
We say that ${w}_i$ w.r.t. $\mathcal{W}_{i}$ is the \textit{claimed local model} of $\mathcal{W}_{i}$. Note that for a benign participant, ${w}_i$ is exactly the local model, but for a Byzantine participant, the claim of $\mathcal{W}_{i}$ may not be equal to its local model. The commitment $\mathcal{W}_{i}$ is then broadcast to all other participants using BFT broadcast. The details of the BFT broadcast can be found in Appendix \ref{Appendix: Design Specification}.

\texttt{Brave} utilizes the Pedersen commitment scheme for $\mathcal{C}$, defined as follows.
\begin{definition}[Pedersen Commitment Scheme \cite{feigenbaum_non-interactive_1992}]
Let $G_q$ denote a group of prime order $q$, such that the discrete logarithm problem in this group is infeasible. Let $g$ and $h$ denote independent generators of $G_q$ that are known to all participants.
The commitment $\mathcal{W}_i$ of local model ${w}_i \in \mathbb{Z}_q$ is generated as
\begin{equation}\label{equation commitment}
	\mathcal{C}({w}_i,{r}_i^{{w}}):=g^{{w}_i}h^{{r}_i^{{w}}},
\end{equation}
where exponents and multiplication are coordinate-wise.
\label{def Pedersen commitment}
\end{definition}

The Pedersen commitment in Eqn. \eqref{equation commitment} provides our protocol with the following properties. 
First, the commitment $\mathcal{W}_i$ does not reveal any information about the local model ${w}_i$, and thus ${w}_i$ is \emph{information-theoretically hiding} \cite{feigenbaum_non-interactive_1992}. 
Second, the commitment is \emph{computationally binding} in the sense that one cannot solve for a local model ${w}_i'$ such that $\mathcal{C}({w}_i,{r}_i^{{w}})=\mathcal{C}({w}_i',{r}_i^{{w}'})$ to open the commitment if it is unable to solve discrete logarithm problems.
Finally, the commitments obtained using Eqn. \eqref{equation commitment} are \emph{additively homomorphic}, i.e., 
\begin{equation}
	\mathcal{C}({w}_1,{r}_1^{{w}}) \mathcal{C}({w}_2,{r}_2^{{w}}) 
	= \mathcal{C}({w}_1+{w}_2,{r}_1^{{w}}+{r}_2^{{w}}).
\label{equation additively homomorphic}
\end{equation}
These properties are summarized in the following lemma \ref{lemma Pedersen commitment}.

\begin{lemma}[\cite{feigenbaum_non-interactive_1992}]
The commitment procedure in Definition \ref{def Pedersen commitment} is computationally binding and additively homomorphic. Furthermore, it ensures that the local model ${w}_i$ is information-theoretically hiding.
\label{lemma Pedersen commitment}
\end{lemma}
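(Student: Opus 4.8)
The plan is to verify the three asserted properties of the Pedersen commitment scheme from Definition \ref{def Pedersen commitment} directly, relying only on elementary group theory and the hardness of the discrete logarithm problem in $G_q$. Since each claim concerns the map $\mathcal{C}(w_i, r_i^w) = g^{w_i} h^{r_i^w}$ with coordinate-wise operations, it suffices to argue everything one coordinate at a time; I would state this reduction once at the outset and then treat $w_i, r_i^w \in \mathbb{Z}_q$ as scalars.

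First I would establish the \emph{additively homomorphic} property \eqref{equation additively homomorphic}, as it is the easiest and purely algebraic. Computing the product gives $\mathcal{C}(w_1, r_1^w)\,\mathcal{C}(w_2, r_2^w) = g^{w_1} h^{r_1^w} g^{w_2} h^{r_2^w} = g^{w_1 + w_2} h^{r_1^w + r_2^w}$, where I use commutativity of $G_q$ and the exponent laws; by definition this equals $\mathcal{C}(w_1 + w_2, r_1^w + r_2^w)$, with the understanding that exponent arithmetic is modulo $q$. Next I would prove \emph{information-theoretic hiding}: fix any commitment value $\mathcal{W}_i$ and any candidate model $w \in \mathbb{Z}_q$. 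Because $h$ is a generator of the prime-order group $G_q$, the map $r \mapsto h^r$ is a bijection on $G_q$, so there is a unique $r \in \mathbb{Z}_q$ with $h^r = \mathcal{W}_i g^{-w}$, i.e. $\mathcal{C}(w, r) = \mathcal{W}_i$. Hence when $r_i^w$ is drawn uniformly from $\mathbb{Z}_q$, every possible model $w$ is equally likely given $\mathcal{W}_i$; the commitment's distribution is independent of $w_i$, so an adversary with unbounded computation gains zero information about $w_i$.

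The main obstacle is the \emph{computationally binding} property, since it is the only claim that depends on a computational hardness assumption rather than pure algebra. Here I would argue by contraposition: suppose an adversary produces two distinct openings $(w, r) \neq (w', r')$ of the same commitment, so $g^{w} h^{r} = g^{w'} h^{r'}$. Rearranging yields $g^{w - w'} = h^{r' - r}$. The key sub-step is to observe that $w \neq w'$ (if $w = w'$ then $h^{r' - r} = 1$ forces $r = r'$ by $h$ being a generator of prime order, contradicting distinctness), so $w - w'$ is invertible modulo the prime $q$; multiplying the exponent relation by $(w - w')^{-1}$ recovers $h = g^{(r' - r)(w - w')^{-1}}$, which is exactly the discrete logarithm $\log_g h$. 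This contradicts the assumed infeasibility of computing discrete logarithms in $G_q$, establishing binding. I would close by noting the coordinate-wise extension is immediate, since a collision in the vector commitment entails a collision in at least one coordinate.
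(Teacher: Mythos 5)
Your proposal is correct in substance, but it necessarily takes a different route from the paper, because the paper does not prove Lemma \ref{lemma Pedersen commitment} at all: the three properties are imported wholesale from Pedersen's original work \cite{feigenbaum_non-interactive_1992}, and the lemma is stated as a citation. What you give is the standard self-contained argument, and each piece is the right one: the homomorphism \eqref{equation additively homomorphic} is the one-line exponent-law computation in the abelian group $G_q$; hiding follows because for uniform $r$ the commitment $g^{w}h^{r}$ is uniform on $G_q$ for \emph{every} fixed $w$ (your bijection argument), so the commitment's distribution is independent of the committed value; and binding is a reduction showing that two distinct openings of one commitment reveal the discrete-log relation between the independent generators $g$ and $h$, contradicting the hardness assumption stated in the paper's threat model. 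The coordinate-wise reduction at the outset is also sound, since a vector collision forces a scalar collision in some coordinate. The benefit of your version over the paper's is that it makes explicit exactly which assumption each property rests on (pure algebra, uniform randomness of $r$, and discrete-log hardness, respectively); the cost is only length, which is why the paper delegates it.

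One algebraic slip in the binding step should be fixed. From $g^{w-w'}=h^{r'-r}$, raising both sides to the power $(w-w')^{-1}$ gives $g=h^{(r'-r)(w-w')^{-1}}$, \emph{not} $h=g^{(r'-r)(w-w')^{-1}}$ as written; the latter equation is false in general. If you want $\log_g h$ itself, note that $r\neq r'$ is also forced (if $r=r'$ then $g^{w-w'}=1$ gives $w=w'$), so $h=g^{(w-w')(r'-r)^{-1}}$. Either form suffices—extracting $\log_h g$ is just as much a violation of discrete-log hardness as extracting $\log_g h$—so the contradiction stands once the exponent is corrected; this is a typo-level repair, not a gap in the argument.
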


In our design, the commitment scheme ``locks'' the initial input and prevents an adversary node from modifying it in later steps. We prove that the proposed commitment allows us to achieve the information-theoretic privacy and Byzantine resilience of \texttt{Brave} in the Appendix. For a comprehensive understanding, detailed pseudocode of this stage is provided in Algorithm \ref{algorithm commitment} in Appendix \ref{Appendix: Design Specification}.

\subsection{Stage 2: Privacy-Preserving Comparison}
In this stage, each $P_i$ conducts a privacy-preserving comparison, which coordinate-wise compares its model with all other participants. To preserve the privacy of the true values in the local model, the comparison is achieved by sharing a \textit{masked} value obtained by adding a random number to the true value, then canceling it during pairwise comparison.

Specifically, $P_i$ generates two random vectors ${v_{ij}}, {r^{v}_{ij}} \in \mathbb{Z}_q^m$ associated with each $P_j$. 
It then calculates a masked local model of $P_i$ with respect to $P_j$ as ${c_{ij}}={w_i}+{v_{ij}}$, and sums two random helpers as ${r^{w+v}_{ij}}={r^w_{i}}+{r^v_{ij}}$ for the purpose of authentication. After that, it computes the commitment of ${v_{ij}}$, denoted as $\mathcal{V}_{ij}$, using Eqn. \eqref{equation commitment} and random helper $r_{ij}^v$, and sends ${c_{ij}}$, ${r^{w+v}_{ij}}$, as well as $\mathcal{V}_{ij}$ to $P_j$. 
In this communication, there are also some random vectors ${s^1_{ij}}$, ${r^{s1}_{ij}}$, ${s^2_{ij}}$, and ${r^{s2}_{ij}}$ being exchanged, which serve as padding randomness in the next stage.
After receiving the message from $P_j$, $P_i$ leverages the additively homomorphic property of commitment (Lemma \ref{lemma Pedersen commitment}) to verify the authenticity of $P_j$'s masked local model $c_{ji}$ using 
\begin{equation}
\setlength{\abovedisplayskip}{3pt}
\setlength{\belowdisplayskip}{3pt}
    \mathcal{W}_j \cdot \mathcal{V}_{ji}=\mathcal{C}(c_{ji},r^{w+v}_{ji}), 
    \label{equation verification}
\end{equation}
where $\mathcal{W}_j$ is the commitment generated by $P_j$ in Stage 1.
If Eqn. \eqref{equation verification} holds for every coordinate of ${c_{ji}}$, $P_i$ calculates ${d_{ij}}={v_{ij}}+{c_{ji}}$ and $r^d_{ij}=r^v_{ij}+r^{w+v}_{ji}$. It then sends a message $m_{ij}:=[{d_{ij}}$, ${r^d_{ij}}$,$\mathcal{V}_{ij}$,$\mathcal{V}_{ji}$] to all other participants except $P_i$. As shown in Fig. \ref{fig: communication structure}, we do not use BFT broadcast here, since $P_i$ can never know ${d_{ij}}$ — if it knows, it can calculate ${v_{ji}}$, and then get ${P_j}$'s local model ${w_j}={{c_{ji}-v_{ji}}}$. 

After exchanging messages with all other participants, each $P_i$ now coordinate-wise compares the local models of any other two participants denoted as $w_q$ and $w_p$ with $q,p \neq i$. 
$P_i$ first check if $\mathcal{V}_{pq}$ or $\mathcal{V}_{qp}$ from $P_q$ and $P_p$ are the same, If they are not the same, then at least one of $P_q$ and $P_p$ is malicious. 
The participant then verifies whether the following two relations 
\begin{equation}
\setlength{\abovedisplayskip}{5pt}
\setlength{\belowdisplayskip}{5pt}
	\left\{\begin{array}{l}
{d_{pq}}={v_{pq}+c_{qp}}={v_{pq}}+{v_{qp}}+{w_q} \\
{d_{qp}}={v_{qp}}+{c_{pq}}={v_{qp}}+{v_{pq}}+{w_p}
\end{array}\right.
\label{equation comparison}
\end{equation}
hold true or not by calculating the commitment of $d_{pq}$ and $d_{qp}$ via $\mathcal{V}_{pq}$, $\mathcal{V}_{qp}$, $\mathcal{W}_{p}$, and $\mathcal{W}_{q}$.
Note that ${v_{pq}}+{v_{qp}}$ is a common term in Eqn. \eqref{equation comparison}. Therefore, if Eqn. \eqref{equation comparison} holds for all coordinates, then $P_i$ can obtain a coordinate-wise relationship $\bowtie$ between $w_p^k$ and $w_q^k$ for each coordinate $k$, where $\bowtie \in \{<,>\}$. We assume ties are broken arbitrarily, so that an equality relationship does not occur.
Finally, $P_i$ broadcasts the relationship of each $P_p$ and $P_q$ along each coordinate $k$ to all participants using BFT broadcast. Please refer to Algorithm \ref{Participant Only: algorithm aggregator comparison and sorting} in Appendix \ref{Appendix: Design Specification} for detailed information.

\subsection{Stage 3: Sorting \& Trimming}

The Sorting \& Trimming stage starts after the relationships are broadcast. In this stage, $P_i$ first counts the relationships it received from BFT broadcast. If a relationship $w_p^k \bowtie w_q^k$ appears more than $2f$ times, it is accepted by $P_i$. Then in each coordinate $k$, $P_i$  sorts $\{w_j^k: P_j \in \mathbb{P} \setminus {P_i}\}$ coordinate-wise in ascending order based on the accepted relationships as $\mathcal{S}^k_i:=w_a^k < \ldots < w_b^k$.
We denote the length of $\mathcal{S}^k_i$ as $N_i^k \leq N$, where the equality holds when all local models of the participants can be sorted in $\mathcal{S}^k_i$ using $\mathcal{R}_{pq}$.
After sorting, $P_i$ trims the lowest $f$ and highest $f$ values in each $\mathcal{S}^k_i$. We denote the remaining participants in the trimmed $\mathcal{S}^k_i$ as \emph{contributors} of coordinate $k$, denoted as $\mathbb{C}^k_i \subset \mathbb{P}$. Detailed protocol can be found in Algorithm \ref{stage 3: contributor selection} in Appendix \ref{Appendix: Design Specification}.

\subsection{Stage 4: Aggregation \& Verification}

Given the contributors from Stage 3, the last stage, Aggregation and Verification, sums the local models of contributors and updates a global model. Specifically, each $P_i$ calculates a cloaked local model $\bar{w}_i$ within each coordinate $k$ as follows
\begin{equation}
\setlength{\abovedisplayskip}{4pt}
\setlength{\belowdisplayskip}{4pt}
    \bar{w}_i^k:=\left(w_i^k-\sum_{j \in \mathbb{C}^k_{-i}} a^k_{i j}+\sum_{j \in \mathbb{C}^k_{+i}} a^k_{i j}\right) \bmod \beta,
    \label{equation one pad}
\end{equation}
where $\mathbb{C}^k_{-i} \subset \mathbb{C}_i^k$ (resp. $\mathbb{C}^k_{+i} \subset \mathbb{C}_i^k$) is the set of contributors whose local models are lower (resp. greater) than $w_i^k$ in $k$-th coordinate, $\beta$ is a commonly known large prime number, and $a_{i j}$ is an agreed randomness between $P_i$ and $P_j$. A similar calculation will be performed to obtain the summation of the cloaked random helpers for verification.
Then $P_i$ sends the cloaked $\bar{w}_i$ and $\bar{r}_i$ with corresponding commitments $\mathcal{A}_i$ and $\mathcal{B}_i$ to all participants via BFT broadcast.

After receiving $\bar{w}_i$ and $\bar{r}_i$ from the BFT broadcast, all participants then sum the cloaked values and calculate the global model $\bar{w}$. Each participant then verifies if
\begin{equation}
\setlength{\abovedisplayskip}{4pt}
\setlength{\belowdisplayskip}{4pt}
	\mathcal{C}(\bar{w}^k,\bar{r}^k) = \prod_{P_j \in \mathbb{C}_i^k} \mathcal{W}_j^k
 \label{equation final verification}
\end{equation} 
holds in each coordinate. If Eqn. \eqref{equation final verification} holds, $P_i$ computes $w^k$ as $\frac{\bar{w}^k}{N_i^k-2f}$ for each coordinate $k$.
$P_i$ next projects $w^k$ to the floating numbers, and completes one iteration of P2P FL. The pseudocode is provided in Algorithm \ref{Participant Only: algorithm sum} in Appendix \ref{Appendix: Design Specification}.

We note that if Eqn. \eqref{equation final verification} does not hold, then $P_i$ can identify the Byzantine adversaries thanks to the commitment scheme. Specifically, for each contributor $P_j\in\mathbb{C}_i^k$, if 
\begin{equation}
    \mathcal{C}(\bar{w}_j^k,\bar{r}^k) \neq \mathcal{W}_j^k \prod_{P_h \in \mathbb{C}^k_{-i}} \mathcal{A}_{ih}^k \prod_{P_h \in \mathbb{C}^k_{+i}} (\mathcal{A}_{ih}^k)^{-1},
\end{equation}
then $P_i$ flags $P_j$ as a Byzantine adversary. We term this procedure as \textit{blame}, and the detailed algorithm can be found in Algorithm \ref{Participant Only: algorithm blame} in Appendix \ref{Appendix: Design Specification}.

\subsection{Privacy and Resilience Guarantees of \texttt{Brave}}

We present the information-theoretic privacy and Byzantine resilience of \texttt{Brave} in Appendix \ref{sec: Byzantine Resilience Analysis}.
Specifically, we leverage Lemma \ref{lemma Pedersen commitment} of the Pedersen Commitment scheme to prove information-theoretic privacy. 
To show the Byzantine resilience of \texttt{Brave}, we prove $\epsilon$-\textit{convergence} and agreement, respectively. We show that the global model obtained by the benign participants deviates from the optimal one by a bounded distance, i.e., satisfies $\epsilon$-convergence property.
We then prove agreement by showing that the relationship $\bowtie$ accepted by benign participants preserves the correct ordering of the claimed local models, then prove the global model of all benign participants is the same using BFT broadcast.

\section{Experiments}
\label{sec: performance evaluation}
We evaluate \texttt{Brave} against three state-of-the-art adversaries using two image classification tasks. We demonstrate that \texttt{Brave} ensures the models learned by P2P FL achieve comparable accuracy to global models trained in the absence of any adversary, and hence resilient to the adversaries.

\begin{table*}[htbp]
\centering
\caption{This table presents the classification accuracy of the learned 2NN and CNN using a P2P FL with $N=10$ and $f=2$. The second row of the table represents the scenario where the Byzantine adversaries send their true local models and do not launch any attack.
Rows 3-6 correspond to different threat models. We observe that \texttt{Brave} ensures the P2P FL to learn a global model with near-optimal classification accuracy against all threat models, and hence is Byzantine-resilient.}
\begin{tabular}{c c c c c} 
\toprule
      \multirow{2}{*}{Adversary Strategy}& \multicolumn{2}{c}{w/o \texttt{Brave}} & \multicolumn{2}{c}{\texttt{Brave}}\\
\cmidrule{2-5}
    & \texttt{2NN+MNIST} & \texttt{CNN+CIFAR10} & \texttt{2NN+MNIST} & \texttt{CNN+CIFAR10}\\
\midrule
    No Attack & $\mathbf{97.35}\%$ & $\mathbf{63.94}\%$ & $97.21\%$ & $63.55\%$\\
    \makecell{Label Flip} & $89.91\%$ & $52.15\%$ & $\textbf{96.74}\%$ & $\mathbf{60.91}\%$\\
    \makecell{Sign Flip} & $11.35\%$ & $48.68\%$ & $\textbf{97.02}\%$ & $\mathbf{63.54}\%$\\
    \makecell{Gaussian $(\sigma = 0.1)$} & $92.02\%$ & $55.58\%$ & $\textbf{96.92}\%$ & $\mathbf{63.08}\%$\\
    \makecell{Gaussian $(\sigma = 1)$} & $53.01\%$ & $10.01\%$ & $\textbf{97.12}\%$ & $\mathbf{61.92}\%$\\
\hline
\end{tabular}
\label{table:Acc ALL}
\end{table*}

\begin{table}[h!]
\centering
\caption{This table presents the accuracy of the 2NN model trained by P2P FL with \texttt{Brave} involving $N=10,15,20$ participants. The number of Byzantine adversaries is set as $f/N=20\%$. We observe that \texttt{Brave} ensures near-optimal classification accuracy with different choices of $N$, and \texttt{Brave} is insensitive to parameter $N$.}

\begin{tabular}{cc cc} 
\toprule
    Adv. Strategy & $N=10$ & $N=15$ & $N=20$\\
\midrule
    No Attack & $97.21\%$ & $97.54\%$ & $ \mathbf{97.64}\%$\\
    Label Flip & $96.74\%$ & $97.18\%$ & $ \mathbf{97.50}\%$\\
    Sign Flip & $97.02\%$ & $97.34\%$ & $\mathbf{97.51}\%$\\
    Gaussian ($\sigma=0.1$) & $96.92\%$ & $97.39\%$ & $\mathbf{97.42}\%$\\
    Gaussian ($\sigma=1$) & $97.12\%$ & $97.27\%$ & $\mathbf{97.59}\%$\\
\bottomrule
\end{tabular}
\label{table:impact of N}
\end{table}

\subsection{Experimental Setup}

\underline{\emph{Datasets}}:
We use two benchmark datasets: CIFAR10 \cite{krizhevsky2009learning} and MNIST \cite{lecun2010mnist}, for image classification tasks.
CIFAR10 consists of $50000$ training and $10000$  testing images, each of size $32\times 32$.
Each image within CIFAR10 belongs to one of ten classes.
There are $60000$ training and $10000$ testing images in MNIST dataset. 
Each image is of size $28\times 28$, and can be classified to one out of ten classes.

\underline{\emph{\texttt{Brave} Setup}}:
We implement \texttt{Brave} on two P2P FL with different settings.
In the first P2P FL, we let the participants train a 2-hidden-layer model (2NN) using samples from the MNIST dataset.
In the second P2P FL, the participants learn a Convolutional Neural Network (CNN) model using CIFAR10 dataset.
In both P2P FL, the training images from CIFAR10 or MNIST dataset are independently and identically  distributed (i.i.d.) to the participants so that each participant has $|\mathcal{D}_i|=2000$ images within its private dataset.
The participants update their local models $w_i(t)$ using stochastic gradient descent (SGD) algorithm with learning rate $\eta=0.01$ \cite{mcmahan2017communication}.


\underline{\emph{Baseline Setup}}: We present the effectiveness of \texttt{Brave} by comparing with a baseline, P2P FL-na\"ive. The baseline implements the classic P2P FL as given in Eqn. \eqref{equation weight update} and \eqref{equation directly average weight}.


\underline{\emph{Threat Models}}: 
We evaluate \texttt{Brave} against Byzantine adversaries who adopt distinct strategies, detailed as below.
\begin{itemize}
    \item \emph{No Attack}: The adversaries does not initiate any attack and behave as benign participants.
    \item \emph{Label Flip Attack \cite{tolpegin2020data-labelflipping}}: Label flip attack is a data poisoning attack. A Byzantine adversary launches a label flip attack by training its local model using mislabeled data. As a consequence, a global model that is corrupted by label flip attacks may misclassify the input image, and hence exhibits low classification accuracy.
    \item \emph{Sign Flip Attack \cite{xie2019zeno-bitflipping}}: A Byzantine adversary carries out a sign flip attack by first flipping the sign of its local model and then sending it to the other participants. As a consequence, the original local model $w_i$ of a Byzantine adversary is manipulated as $-w_i$. 
    \item \emph{Gaussian Attack \cite{fang2020local-gaussian}}: Gaussian attack is essentially an instantiation of a model poisoning attack. A Byzantine adversary manipulates its local model by adding a zero-mean Gaussian noise $y^k\sim\mathcal{N}(0,\sigma^2)$ to $w_i^k(t)$ along each dimension $k$. 
    Here $\sigma$ is the standard deviation that can be tuned by the Byzantine adversary.
    In our experiments, we evaluate the scenarios where $\sigma$ are chosen as $0.1$ and $1$.
\end{itemize}

The state-of-the-art white-box membership inference attack (MIA) \cite{zhang2020gan} requires the passive adversaries to access the local models of other participants and the architecture of the neural network.
By our design of \texttt{Brave}, the passive adversaries can only receive the masked local models and commitments, whereas the local models are hidden.  
Therefore, the white-box MIA is infeasible for P2P FL when \texttt{Brave} is implemented.

\underline{\emph{Evaluation Metric}}: We use classification accuracy over the testing dataset as the evaluation metric. It captures the fraction of input images sampled from the testing dataset that can be correctly classified by using the learned model.


\subsection{Experimental Results}
In what follows, we demonstrate the effectiveness of \texttt{Brave}.

\underline{\emph{Byzantine Resilience of \texttt{Brave}}}: 
We evaluate the classification accuracy of the learned global models obtained by P2P FL with $N=10$ participants and $f=2$ Byzantine adversaries.
In Table \ref{table:Acc ALL}, we present the classification accuracy of the learned 2NN and CNN when the Byzantine adversaries adopt different attack strategies.
We observe that if the Byzantine adversaries does not initiate any attack, then \texttt{Brave} retains comparable accuracy compared with classic P2P FL that does not implement \texttt{Brave}.
Furthermore, \texttt{Brave} guarantees significantly higher accuracy of the learned model once the Byzantine adversaries send compromised local models to the other participants, and thereby is Byzantine-resilient.

\begin{figure*}[ht]
\centering
                 \begin{subfigure}{.32\textwidth}
                 \includegraphics[width=\textwidth]{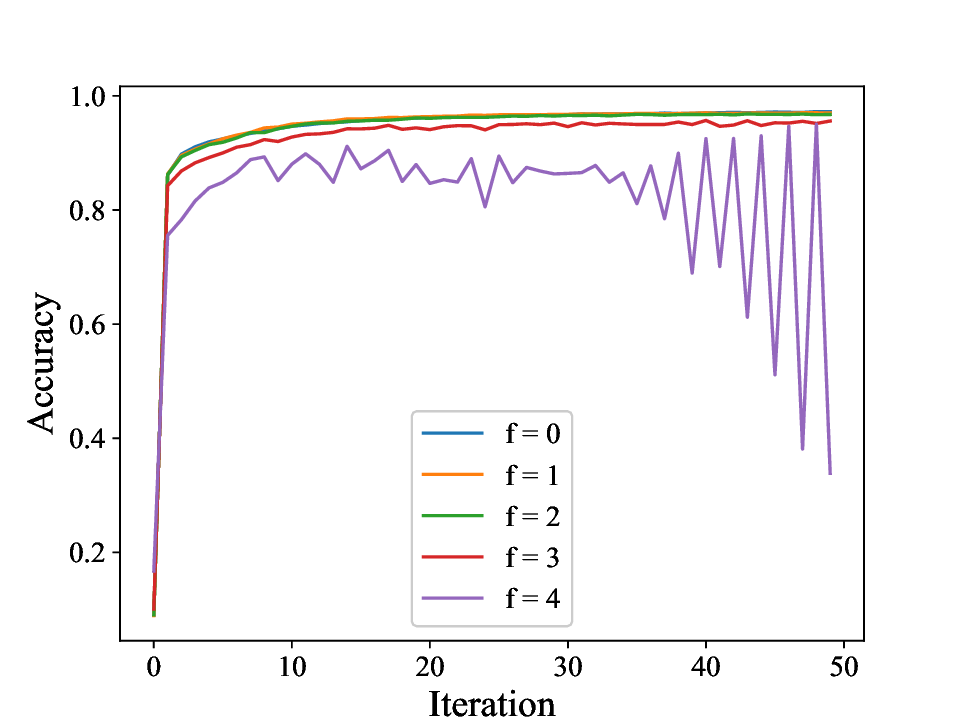}
                 \subcaption {Label Flip Attack}
                 \label{fig:converge_2nn_label}
                 \end{subfigure}\hfill
                 \begin{subfigure}{.32\textwidth}
                 \includegraphics[width=\textwidth]{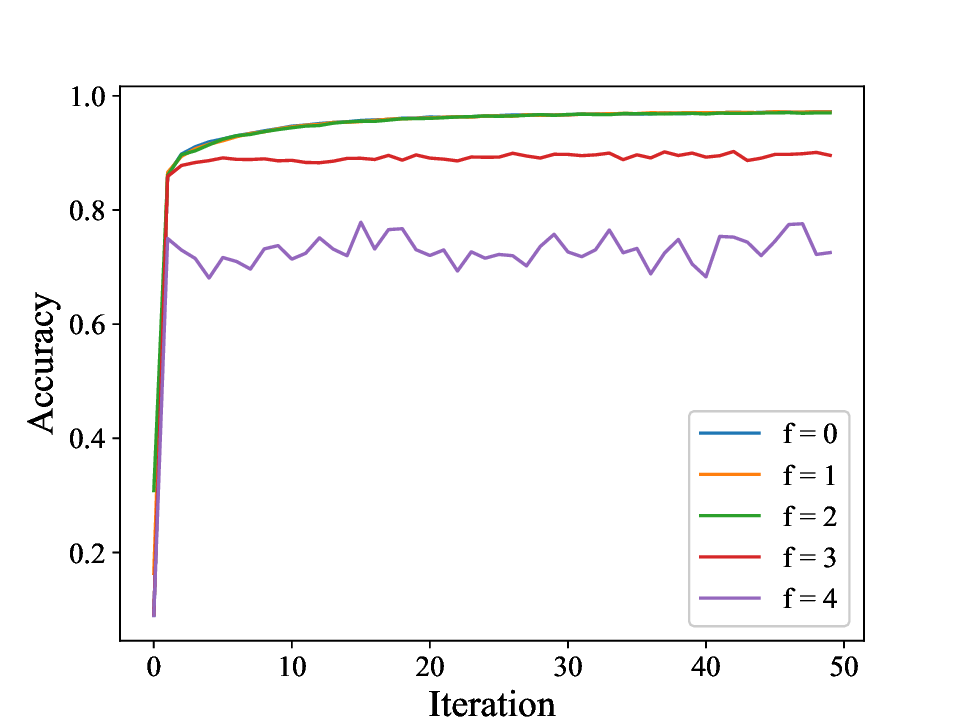}
                 \subcaption {Sign Flip Attack}
                 \label{fig:converge_2nn_sign}
                 \end{subfigure}\hfill
                 \begin{subfigure}{.32\textwidth}
                 \includegraphics[width=\textwidth]{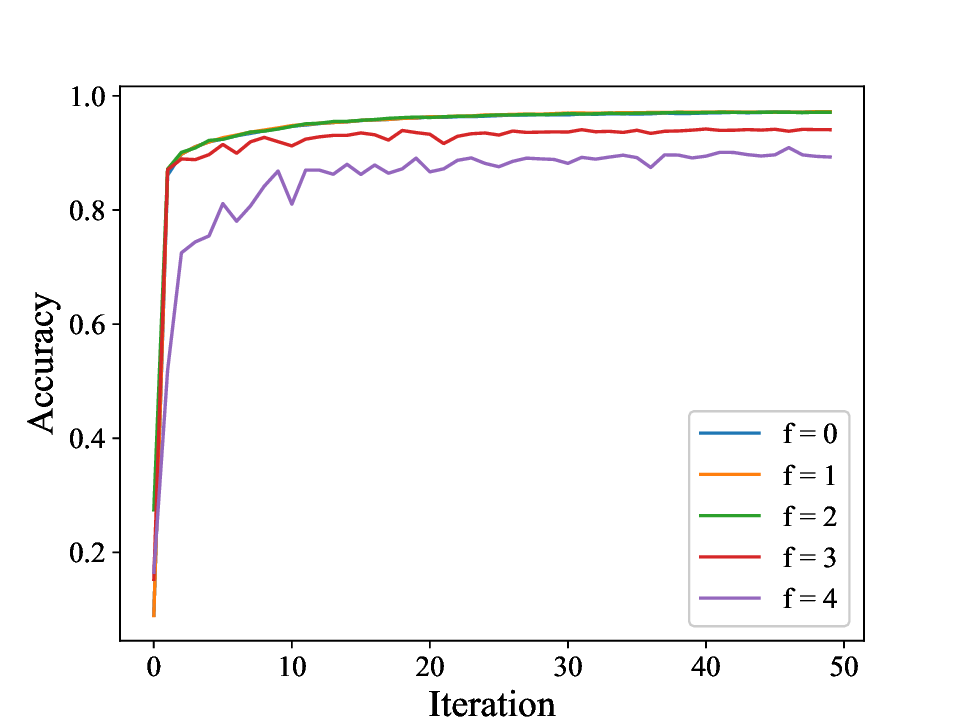}
                 \subcaption{Gaussian Attack ($\sigma=1$)}
                 \label{fig:converge_2nn_gaussian}
                 \end{subfigure}\hfill
\caption{This figure presents the accuracy of 2NN learned using P2P FL with $N=10$ participants at each iteration $t$. When the number of Byzantine adversaries $f$ satisfies $N>3f+2$, i.e., $f\in\{0,1,2\}$,  \texttt{Brave} ensures $\epsilon$-convergence property as given in Definition \ref{def:byzantine resilience}. When $f$ violates $N>3f+2$, the Byzantine adversaries can corrupt the learned 2NN, and even prevents FL from converging (Fig. \ref{fig:converge_2nn_label}, $f=4$).}
\end{figure*}

\begin{figure*}[ht]
\centering
                 \begin{subfigure}{.31\textwidth}
                 \includegraphics[width=\textwidth]{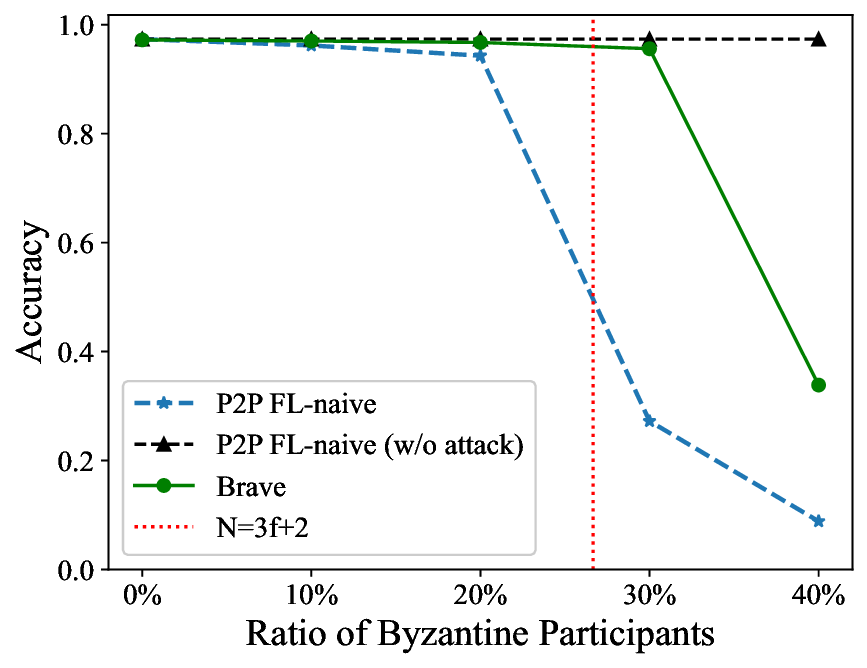}
                 \subcaption {Label Flip Attack}
                 \label{fig:ratio_2nn_label}
                 \end{subfigure}\hfill
                 \begin{subfigure}{.31\textwidth}
                 \includegraphics[width=\textwidth]{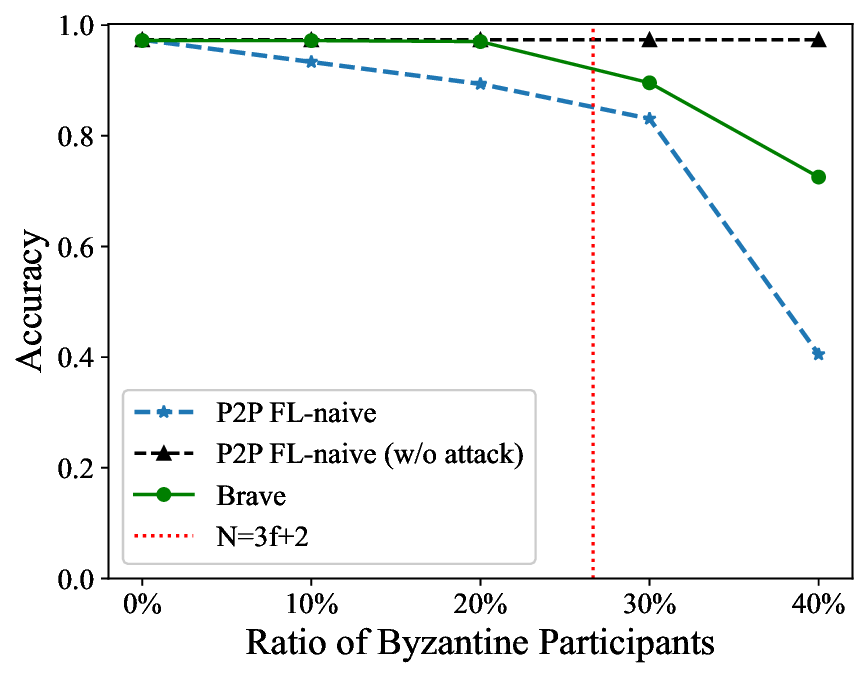}
                 \subcaption {\small{Sign Flip Attack}}
                 \label{fig:ratio_2nn_sign}
                 \end{subfigure}\hfill
                 \begin{subfigure}{.31\textwidth}
                 \includegraphics[width=\textwidth]{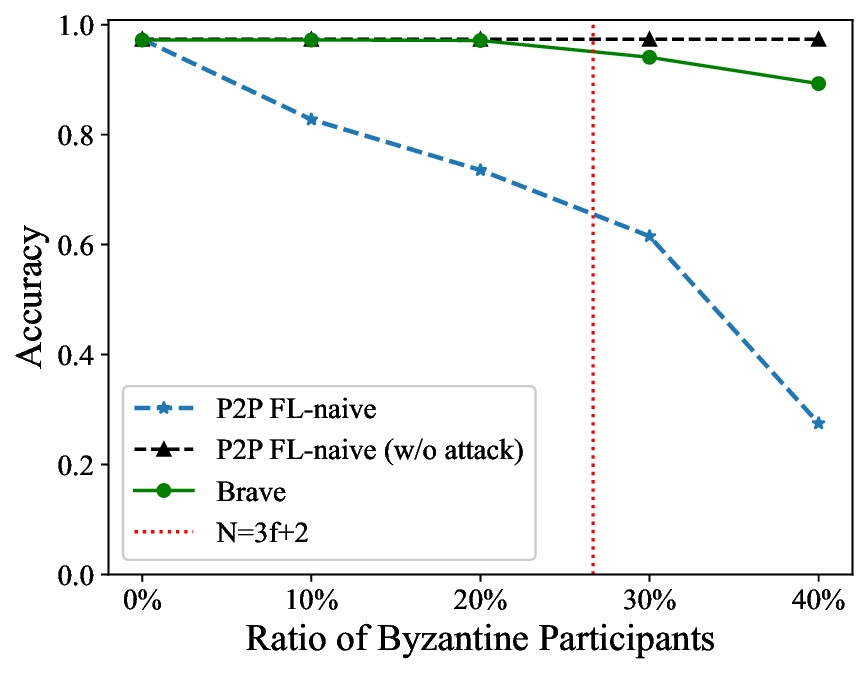}
                 \subcaption{Gaussian Attack ($\sigma=1$)}
                 \label{fig:ratio_2nn_gaussian}
                 \end{subfigure}\hfill
\caption{This figure presents how accuracy of the learned 2NN varies with respect to the ratio of Byzantine participants when $N=10$. If $N>3f+2$ holds (the left-hand side of the vertical red line), then \texttt{Brave} guarantees Byzantine resilience as the performance of learned 2NN is comparable to P2P FL-na\"ive learned without any adversary.}
\end{figure*}

\underline{\emph{$\epsilon$-convergence of \texttt{Brave}}}:  
We demonstrate $\epsilon$-convergence of \texttt{Brave} in Fig. \ref{fig:converge_2nn_label}-\ref{fig:converge_2nn_gaussian}.
We observe that if $N>3f+2$ holds, then \texttt{Brave} guarantees that the global model learned in the presence of Byzantine adversaries remains $\epsilon$-close to the global model learned when $f=0$. 
We further notice that the value of $\epsilon$ increases with respect to the number of Byzantine adversaries.
The presence of more Byzantine adversaries could introduce additional bias to the learned global model, or even cause FL to fail (as shown in Fig. \ref{fig:converge_2nn_label} when $f=4$).

\underline{\emph{Insensitivity to the Number of Participants $N$}}:  
We evaluate the effectiveness of \texttt{Brave} when applying to P2P FL of different scales, i.e., the number of participants is $N=\{10,15,20\}$.
In all three settings, we fix the ratio of Byzantine participants as $f/N=20\%$.
We find that the classification accuracy increases with respect to the number of participants $N$, given that $f/N$ remains fixed.
Such increment of accuracy is because more training data is used as $N$ increases. 
Therefore, we believe that \texttt{Brave} is insensitive to the scale of P2P FL. 
For P2P FL of different scales, \texttt{Brave} guarantees their Byzantine resilience when $N>3f+2$ holds.

\underline{\emph{Sensitivity to the Ratio of Byzantine Adversaries $f/N$}}:  
We evaluate the effectiveness of \texttt{Brave} when the ratio of Byzantine adversaries $f/N$ varies in Fig. \ref{fig:ratio_2nn_label}-\ref{fig:ratio_2nn_gaussian}. We consider the 2NN learned using P2P FL with $N=10$. We observe that \texttt{Brave} can maintain high classification accuracy (stay close to the orange curve with triangle markers) when $f$ satisfies $N>3f+2$ (the left-hand side of the vertical red line).
When P2P FL does not implement \texttt{Brave}, a significant decrease in accuracy can be observed from the blue curve even when $N>3f+2$ holds.
Once $N>3f+2$ is violated, the Byzantine resilience guarantee provided by \texttt{Brave} decreases. However, P2P FL with \texttt{Brave} implemented still outperforms P2P FL-na\"ive which does not adopt \texttt{Brave}.

\section{Conclusion and Future Work}
\label{sec: conclusion}

In this paper, we considered peer-to-peer federated learning in the presence of both passive and Byzantine adversaries.
The passive adversaries aimed at inferring the other participants' private information during the training process, whereas Byzantine adversaries could arbitrarily manipulate the information it sent to disrupt the learning algorithm.
We developed a four-stage P2P FL protocol named \texttt{Brave} that information-theoretically preserves privacy and is resilient to malicious attacks caused by Byzantine adversaries.
We evaluated \texttt{Brave} using two image classification tasks with CIFAR10 and MNIST datasets. 
Our results showed that \texttt{Brave} can effectively defend against the state-of-the-art adversaries.
In future work, we will reduce the complexity of \texttt{Brave} and incorporate the delays incurred on the communication links when participants exchange information.

\section*{Acknowledgements}
This work is partially supported by the National Science Foundation (NSF) under grant No. 2229876 and Air Force Office of Scientific Research (AFOSR) under grant FA9550-23-1-0208.

This work is supported in part by funds provided by the National Science Foundation, by the Department of Homeland Security, and by IBM. 
Any opinions, findings, and conclusions or recommendations expressed in this material are those of the author(s) and do not necessarily reflect the views of the National Science Foundation or its federal agency and industry partners.

\bibliography{references}

\begin{thebibliography}{44}
\providecommand{\natexlab}[1]{#1}

\bibitem[{Bagdasaryan, Poursaeed, and
  Shmatikov(2019)}]{bagdasaryan2019differential}
Bagdasaryan, E.; Poursaeed, O.; and Shmatikov, V. 2019.
\newblock Differential privacy has disparate impact on model accuracy.
\newblock \emph{Advances in Neural Information Processing Systems}, 32.

\bibitem[{Blanchard et~al.(2017)Blanchard, El~Mhamdi, Guerraoui, and
  Stainer}]{blanchard2017machine}
Blanchard, P.; El~Mhamdi, E.~M.; Guerraoui, R.; and Stainer, J. 2017.
\newblock Machine learning with adversaries: {Byzantine} tolerant gradient
  descent.
\newblock \emph{Advances in Neural Information Processing Systems}, 30.

\bibitem[{Bonawitz et~al.(2017)Bonawitz, Ivanov, Kreuter, Marcedone, McMahan,
  Patel, Ramage, Segal, and Seth}]{bonawitz2017practical}
Bonawitz, K.; Ivanov, V.; Kreuter, B.; Marcedone, A.; McMahan, H.~B.; Patel,
  S.; Ramage, D.; Segal, A.; and Seth, K. 2017.
\newblock Practical secure aggregation for privacy-preserving machine learning.
\newblock In \emph{ACM SIGSAC Conference on Computer and Communications
  Security}, 1175--1191.

\bibitem[{Castro and Liskov(1999)}]{castro1999practical}
Castro, M.; and Liskov, B. 1999.
\newblock Practical {Byzantine} Fault Tolerance.
\newblock In \emph{Third Symposium on Operating Systems Design and
  Implementation}, OSDI '99, 173–186. USA: USENIX Association.
\newblock ISBN 1880446391.

\bibitem[{Che et~al.(2021)Che, Li, Chen, He, and Zheng}]{che2021decentralized}
Che, C.; Li, X.; Chen, C.; He, X.; and Zheng, Z. 2021.
\newblock A Decentralized Federated Learning Framework via Committee Mechanism
  with Convergence Guarantee.
\newblock \emph{arXiv preprint arXiv:2108.00365}.

\bibitem[{Chen et~al.(2021)Chen, Chen, Zeng, and Weng}]{chen2021bdfl}
Chen, J.-H.; Chen, M.-R.; Zeng, G.-Q.; and Weng, J.-S. 2021.
\newblock BDFL: a {Byzantine}-fault-tolerance decentralized federated learning
  method for autonomous vehicle.
\newblock \emph{IEEE Transactions on Vehicular Technology}, 70(9): 8639--8652.

\bibitem[{Dwork(2008)}]{dwork2008differential}
Dwork, C. 2008.
\newblock Differential privacy: A survey of results.
\newblock In \emph{International Conference on Theory and Applications of
  Models of Computation}, 1--19. Springer.

\bibitem[{Fang et~al.(2020)Fang, Cao, Jia, and Gong}]{fang2020local-gaussian}
Fang, M.; Cao, X.; Jia, J.; and Gong, N. 2020.
\newblock Local model poisoning attacks to {B}yzantine-robust federated
  learning.
\newblock In \emph{29th USENIX Security Symposium (USENIX Security 20)},
  1605--1622.

\bibitem[{Han et~al.(2022)Han, Han, Huang, and Ma}]{han2022defl}
Han, J.; Han, Y.; Huang, G.; and Ma, Y. 2022.
\newblock {DeFL}: Decentralized Weight Aggregation for Cross-silo Federated
  Learning.
\newblock \emph{arXiv preprint arXiv:2208.00848}.

\bibitem[{He, Karimireddy, and Jaggi(2020)}]{he2020secure}
He, L.; Karimireddy, S.~P.; and Jaggi, M. 2020.
\newblock Secure {Byzantine}-robust machine learning.
\newblock \emph{arXiv preprint arXiv:2006.04747}.

\bibitem[{Hitaj, Ateniese, and Perez-Cruz(2017)}]{hitaj2017deep}
Hitaj, B.; Ateniese, G.; and Perez-Cruz, F. 2017.
\newblock Deep models under the {GAN}: information leakage from collaborative
  deep learning.
\newblock In \emph{ACM SIGSAC Conference on Computer and Communications
  Security}, 603--618.

\bibitem[{Joux, Odlyzko, and Pierrot(2014)}]{joux2014past}
Joux, A.; Odlyzko, A.; and Pierrot, C. 2014.
\newblock The past, evolving present, and future of the discrete logarithm.
\newblock \emph{Open Problems in Mathematics and Computational Science}, 5--36.

\bibitem[{Kone{\v{c}}n{\`y} et~al.(2016)Kone{\v{c}}n{\`y}, McMahan, Yu,
  Richt{\'a}rik, Suresh, and Bacon}]{konevcny2016federated}
Kone{\v{c}}n{\`y}, J.; McMahan, H.~B.; Yu, F.~X.; Richt{\'a}rik, P.; Suresh,
  A.~T.; and Bacon, D. 2016.
\newblock Federated learning: Strategies for improving communication
  efficiency.
\newblock \emph{arXiv preprint arXiv:1610.05492}.

\bibitem[{Kotla et~al.(2007)Kotla, Alvisi, Dahlin, Clement, and
  Wong}]{kotla2007zyzzyva}
Kotla, R.; Alvisi, L.; Dahlin, M.; Clement, A.; and Wong, E. 2007.
\newblock Zyzzyva: {Speculative} {Byzantine} fault tolerance.
\newblock In \emph{Twenty-first ACM SIGOPS Symposium on Operating Systems
  Principles}, 45--58.

\bibitem[{Krizhevsky(2009)}]{krizhevsky2009learning}
Krizhevsky, A. 2009.
\newblock \emph{Learning multiple layers of features from tiny images}.
\newblock Ph.D. thesis, University of Toronto.

\bibitem[{Lalitha et~al.(2019)Lalitha, Kilinc, Javidi, and
  Koushanfar}]{lalitha2019peer}
Lalitha, A.; Kilinc, O.~C.; Javidi, T.; and Koushanfar, F. 2019.
\newblock Peer-to-peer federated learning on graphs.
\newblock \emph{arXiv preprint arXiv:1901.11173}.

\bibitem[{Lamport, Shostak, and Pease(2019)}]{lamport2019byzantine}
Lamport, L.; Shostak, R.; and Pease, M. 2019.
\newblock \emph{The {B}yzantine generals problem}, 203--226.
\newblock Association for Computing Machinery.
\newblock ISBN 9781450372701.

\bibitem[{LeCun, Cortes, and Burges(2010)}]{lecun2010mnist}
LeCun, Y.; Cortes, C.; and Burges, C. 2010.
\newblock {MNIST} handwritten digit database.
\newblock Available: http://yann.lecun.com/exdb/mnist.

\bibitem[{Lee and Chang(1998)}]{lee1998efficient}
Lee, W.-B.; and Chang, C.-C. 1998.
\newblock Efficient group signature scheme based on the discrete logarithm.
\newblock \emph{IEE Proceedings-Computers and Digital Techniques}, 145(1):
  15--18.

\bibitem[{Li et~al.(2019)Li, Xu, Chen, Giannakis, and Ling}]{li2019rsa}
Li, L.; Xu, W.; Chen, T.; Giannakis, G.~B.; and Ling, Q. 2019.
\newblock {RSA}: {Byzantine}-robust stochastic aggregation methods for
  distributed learning from heterogeneous datasets.
\newblock In \emph{the AAAI Conference on Artificial Intelligence}, volume~33,
  1544--1551.

\bibitem[{Li et~al.(2021)Li, Chen, Liu, Huang, Zheng, and
  Yan}]{Li2021Blockchain}
Li, Y.; Chen, C.; Liu, N.; Huang, H.; Zheng, Z.; and Yan, Q. 2021.
\newblock A Blockchain-Based Decentralized Federated Learning Framework with
  Committee Consensus.
\newblock \emph{IEEE Network}, 35(1): 234--241.

\bibitem[{Liu et~al.(2021)Liu, Zhang, Kumar, Khurana, and
  Koyejo}]{liu2021scalable}
Liu, A.; Zhang, J.~Y.; Kumar, N.; Khurana, D.; and Koyejo, O.~O. 2021.
\newblock Scalable Robust Federated Learning with Provable Security Guarantees.

\bibitem[{Ma et~al.(2022{\natexlab{a}})Ma, Sun, Wu, Liu, Chen, and
  Dong}]{ma2022differentially}
Ma, X.; Sun, X.; Wu, Y.; Liu, Z.; Chen, X.; and Dong, C. 2022{\natexlab{a}}.
\newblock Differentially Private {Byzantine}-robust Federated Learning.
\newblock \emph{IEEE Transactions on Parallel and Distributed Systems}.

\bibitem[{Ma et~al.(2022{\natexlab{b}})Ma, Zhou, Wang, and
  Miao}]{ma2022privacy}
Ma, X.; Zhou, Y.; Wang, L.; and Miao, M. 2022{\natexlab{b}}.
\newblock Privacy-preserving {Byzantine}-robust federated learning.
\newblock \emph{Computer Standards \& Interfaces}, 80: 103561.

\bibitem[{McCurley(1990)}]{mccurley1990discrete}
McCurley, K.~S. 1990.
\newblock The discrete logarithm problem.
\newblock In \emph{Proc. of Symp. in Applied Math}, volume~42, 49--74. USA.

\bibitem[{McMahan et~al.(2017)McMahan, Moore, Ramage, Hampson, and
  y~Arcas}]{mcmahan2017communication}
McMahan, B.; Moore, E.; Ramage, D.; Hampson, S.; and y~Arcas, B.~A. 2017.
\newblock Communication-efficient learning of deep networks from decentralized
  data.
\newblock In \emph{Artificial Intelligence and Statistics}, 1273--1282. PMLR.

\bibitem[{Mohamad et~al.(2023)Mohamad, Onen, Jaballah, and
  Contu}]{mohamad2023sok}
Mohamad, M.; Onen, M.; Jaballah, W.~B.; and Contu, M. 2023.
\newblock {SoK}: Secure Aggregation based on cryptographic schemes for
  Federated Learning.
\newblock In \emph{Privacy Enhancing Technologies Symposium}, volume~1.

\bibitem[{Mugunthan et~al.(2019)Mugunthan, Polychroniadou, Byrd, and
  Balch}]{mugunthan2019smpai}
Mugunthan, V.; Polychroniadou, A.; Byrd, D.; and Balch, T.~H. 2019.
\newblock {SMPAI}: Secure multi-party computation for federated learning.
\newblock In \emph{the NeurIPS 2019 Workshop on Robust AI in Financial
  Services}.

\bibitem[{Nasr, Shokri, and Houmansadr(2019)}]{nasr2019comprehensive}
Nasr, M.; Shokri, R.; and Houmansadr, A. 2019.
\newblock Comprehensive privacy analysis of deep learning: Passive and active
  white-box inference attacks against centralized and federated learning.
\newblock In \emph{IEEE Symposium on Security and Privacy (SP)}, 739--753.
  IEEE.

\bibitem[{Pedersen(1992)}]{feigenbaum_non-interactive_1992}
Pedersen, T.~P. 1992.
\newblock Non-Interactive and Information-Theoretic Secure Verifiable Secret
  Sharing.
\newblock In \emph{Advances in {Cryptology} — {CRYPTO} ’91}, volume 576,
  129--140. Springer Berlin Heidelberg.
\newblock ISBN 978-3-540-55188-1.

\bibitem[{Ramanan and Nakayama(2020)}]{ramanan2020baffle}
Ramanan, P.; and Nakayama, K. 2020.
\newblock Baffle: Blockchain based aggregator free federated learning.
\newblock In \emph{2020 IEEE International Conference on Blockchain
  (Blockchain)}, 72--81. IEEE.

\bibitem[{Roy et~al.(2019)Roy, Siddiqui, P{\"o}lsterl, Navab, and
  Wachinger}]{roy2019braintorrent}
Roy, A.~G.; Siddiqui, S.; P{\"o}lsterl, S.; Navab, N.; and Wachinger, C. 2019.
\newblock Braintorrent: A peer-to-peer environment for decentralized federated
  learning.
\newblock \emph{arXiv preprint arXiv:1905.06731}.

\bibitem[{Savazzi, Nicoli, and Rampa(2020)}]{stefano2020Federated}
Savazzi, S.; Nicoli, M.; and Rampa, V. 2020.
\newblock Federated Learning With Cooperating Devices: A Consensus Approach for
  Massive IoT Networks.
\newblock \emph{IEEE Internet of Things Journal}, 7(5): 4641--4654.

\bibitem[{Shayan et~al.(2021)Shayan, Fung, Yoon, and
  Beschastnikh}]{shayan2021biscotti}
Shayan, M.; Fung, C.; Yoon, C. J.~M.; and Beschastnikh, I. 2021.
\newblock Biscotti: A Blockchain System for Private and Secure Federated
  Learning.
\newblock \emph{IEEE Transactions on Parallel and Distributed Systems}, 32(7):
  1513--1525.

\bibitem[{Shi(2020)}]{shi2020foundations}
Shi, E. 2020.
\newblock Foundations of Distributed Consensus and Blockchains.
\newblock \emph{URL: http://elaineshi. com/docs/blockchain-book. pdf}.

\bibitem[{Tolpegin et~al.(2020)Tolpegin, Truex, Gursoy, and
  Liu}]{tolpegin2020data-labelflipping}
Tolpegin, V.; Truex, S.; Gursoy, M.~E.; and Liu, L. 2020.
\newblock Data poisoning attacks against federated learning systems.
\newblock In \emph{European Symposium on Research in Computer Security},
  480--501. Springer.

\bibitem[{Truex et~al.(2019)Truex, Baracaldo, Anwar, Steinke, Ludwig, Zhang,
  and Zhou}]{truex2019hybrid}
Truex, S.; Baracaldo, N.; Anwar, A.; Steinke, T.; Ludwig, H.; Zhang, R.; and
  Zhou, Y. 2019.
\newblock A hybrid approach to privacy-preserving federated learning.
\newblock In \emph{12th ACM Workshop on Artificial Intelligence and Security},
  1--11.

\bibitem[{Velicheti, Xia, and Koyejo(2021)}]{velicheti2021secure}
Velicheti, R.~K.; Xia, D.; and Koyejo, O. 2021.
\newblock Secure {Byzantine}-Robust Distributed Learning via Clustering.
\newblock \emph{arXiv preprint arXiv:2110.02940}.

\bibitem[{Wei et~al.(2020)Wei, Li, Ding, Ma, Yang, Farokhi, Jin, Quek, and
  Poor}]{wei2020federated}
Wei, K.; Li, J.; Ding, M.; Ma, C.; Yang, H.~H.; Farokhi, F.; Jin, S.; Quek,
  T.~Q.; and Poor, H.~V. 2020.
\newblock Federated learning with differential privacy: Algorithms and
  performance analysis.
\newblock \emph{IEEE Transactions on Information Forensics and Security}, 15:
  3454--3469.

\bibitem[{Xie, Koyejo, and Gupta(2019)}]{xie2019zeno-bitflipping}
Xie, C.; Koyejo, S.; and Gupta, I. 2019.
\newblock Zeno: Distributed stochastic gradient descent with suspicion-based
  fault-tolerance.
\newblock In \emph{International Conference on Machine Learning}, 6893--6901.
  PMLR.

\bibitem[{Yao(1982)}]{yao1982protocols}
Yao, A.~C. 1982.
\newblock Protocols for secure computations.
\newblock In \emph{23rd Annual Symposium on Foundations of Computer Science},
  160--164.

\bibitem[{Yin et~al.(2018)Yin, Chen, Kannan, and Bartlett}]{yin2018byzantine}
Yin, D.; Chen, Y.; Kannan, R.; and Bartlett, P. 2018.
\newblock Byzantine-robust distributed learning: Towards optimal statistical
  rates.
\newblock In \emph{International Conference on Machine Learning}, 5650--5659.
  PMLR.

\bibitem[{Yin et~al.(2019)Yin, Malkhi, Reiter, Gueta, and
  Abraham}]{yin2019hotstuff}
Yin, M.; Malkhi, D.; Reiter, M.~K.; Gueta, G.~G.; and Abraham, I. 2019.
\newblock Hotstuff: {BFT} consensus with linearity and responsiveness.
\newblock In \emph{ACM Symposium on Principles of Distributed Computing},
  347--356.

\bibitem[{Zhang et~al.(2020)Zhang, Zhang, Chen, and Yu}]{zhang2020gan}
Zhang, J.; Zhang, J.; Chen, J.; and Yu, S. 2020.
\newblock {GAN} enhanced membership inference: {A} passive local attack in
  federated learning.
\newblock In \emph{IEEE International Conference on Communications (ICC)},
  1--6. IEEE.

\end{thebibliography}

\appendix
\newpage
\appendix
\onecolumn

\clearpage

\section{Detailed Algorithms of \texttt{Brave}}
\label{Appendix: Design Specification}

\subsection{BFT Broadcast} All stages of \texttt{Brave} require the participants to broadcast messages via a module named BFT broadcast.
BFT broadcast utilizes Byzantine-fault-tolerant (BFT) distributed consensus algorithms, which follow a propose-vote paradigm \cite{shi2020foundations}.
In the BFT broadcast, the message sent by each participant is treated as a proposal.
All participants will incorporate the proposal to their local information set as long as more than $N-f$ participants agree with the proposal.
Typical implementations of the BFT distributed consensus algorithms include PBFT \cite{castro1999practical}, Zyzzyva \cite{kotla2007zyzzyva}, and Hotstuff \cite{yin2019hotstuff}.
Using the state-of-the-art implementation of BFT broadcast, Hotstuff \cite{yin2019hotstuff}, \texttt{Brave} guarantees that all participants will reach an agreement on messages sent by the participants, with communication complexity linear in $N$.

\subsection{Stage 1: Commitment}

After updating its local model ${w}_i(t)$ in each iteration, participant $P_i$ generates a \textit{commitment} of ${w}_i(t)$. Commitment allows participants to pledge to a chosen value while keeping it hidden from others, with the ability to reveal that value later.

\begin{algorithm}
\caption{Stage 1: Commitment}
\label{algorithm commitment} 
\begin{algorithmic}[1]
\FORALL{$P_i \in \mathbb{P}$ in parallel}
    \STATE ${w}_i =$ Update(${w}(t), \mathcal{D}_i$) \label{algorithm commitment line 1}
    \STATE ${r^w_{i}} \gets$ Generate a random helper for commitment
    \STATE $\mathcal{W}_{i}=\mathcal{C}({w_i},{r^w_{i}})$ \label{algorithm commitment line 4}
    \STATE \texttt{BFT-Broadcast}($\mathcal{W}_i$)
\ENDFOR
\end{algorithmic}
\end{algorithm}

\subsection{Stage 2: Privacy-preserving Comparison}

In this stage, participant $P_i$ conducts a privacy-preserving comparison, which coordinate-wise compares its model with all other participants.

\begin{algorithm}[h!]
\caption{Stage 2: Privacy-Preserving Comparison}
\label{Participant Only: algorithm aggregator comparison and sorting} 
\begin{algorithmic}[1]
\FORALL{$P_i \in \mathbb{P}$ in parallel}
    \FORALL{$P_j \in \mathbb{P} \setminus \{P_i\}$ in parallel} \label{algorithm pairwise comparison starts}
        \STATE Generate random vectors ${v_{ij}}, {r^v_{ij}}$, ${s^1_{ij}}, {r^{s1}_{ij}}, {s^2_{ij}}, {r^{s2}_{ij}}$ in $\mathbb{Z}_q^m$\; \label{algorithm generate random vectors}
        \STATE ${{c_{ij}}}={v_{ij}}+{w_i}$; ${r^{w+v}_{ij}}={r^w_{i}}+{r^v_{ij}}$
        \STATE $\mathcal{V}_{ij}=\mathcal{C}({v_{ij}},{r^v_{ij}})$
        \STATE Send [$\mathcal{V}_{ij}$, ${c_{ij}}$, ${r^{w+v}_{ij}}$, ${s^1_{ij}}$, ${r^{s1}_{ij}}$, ${s^2_{ij}}$, ${r^{s2}_{ij}}$] to $P_j$
        \STATE Receive [$\mathcal{V}_{ji}$, ${c_{ji}}$, ${r^{w+v}_{ji}}$, ${s^1_{ji}}$, ${r^{s1}_{ji}}$, ${s^2_{ji}}$, ${r^{s2}_{ji}}$] from $P_j$ \label{algorithm line receive message from neighbor}
        \IF{$\mathcal{W}_j \cdot \mathcal{V}_{ji}=\mathcal{C}(c_{ji},r^{w+v}_{ji})$} \label{line participant check}
            \STATE $d_{ij}=v_{ij}+c_{ji}$; $r^d_{ij}=r^v_{ij}+r^{w+v}_{ji}$
            \STATE Send $m_{ij}$=[$d_{ij}$, $r^d_{ij}$, $\mathcal{V}_{ij}$, $\mathcal{V}_{ji}$] to $\mathbb{P} \setminus \{P_j\}$ \label{algorithm line send messages to participants}
        \ENDIF
    \ENDFOR
\WHILE {receive $m_{pq}$ and $m_{qp}$ from $P_p$ and $P_q$}
    \IF {same values of $\mathcal{V}_{pq}$ or $\mathcal{V}_{qp}$ from $m_{pq}$ and $m_{qp}$} \label{algorithm line authenticate msgs}
    \IF {$\mathcal{C}(d_{pq},r^d_{pq}) = \mathcal{V}_{pq} \cdot \mathcal{V}_{qp} \cdot \mathcal{W}_q$ and $\mathcal{C}(d_{qp},r^d_{qp}) = \mathcal{V}_{qp} \cdot \mathcal{V}_{pq} \cdot \mathcal{W}_p$}
        \FOR{\textbf{all} $k \in [m]$} \label{algorithm line for all} 
            \STATE $\mathcal{R}_{pq}=\{w_{p}^k \bowtie w_{q}^k : \forall k\}$
        \ENDFOR
        \STATE \texttt{BFT-Broadcast}($\mathcal{R}_{pq}$) \label{algorithm line BB for relation}
    \ENDIF
    \ENDIF
\ENDWHILE
\ENDFOR
\end{algorithmic}
\end{algorithm}

\subsection{Stage 3: Sorting and Trimming}
In this stage, the participant sorts the local models along each coordinate after receiving the pairwise comparison results. It then trims the largest and smallest $f$ values in each coordinate and denotes the remainder as \textit{contributors}.

\begin{algorithm}[h!]
\caption{Stage 3: Contributor Selection}
\label{stage 3: contributor selection}
\begin{algorithmic}[1]
\FORALL{$P_i \in \mathbb{P}$ in parallel}
\FOR{\textbf{all} $\mathcal{R}_{pq}$ from BFT Broadcast}
    \IF{$w_p^k>w_q^k$ (resp. $w_p^k<w_q^k$) appears more than $2f$ times} \label{line count}
        \STATE Accept $w_p^k>w_q^k$ (resp. $w_p^k<w_q^k$) \label{codeline accept wp>wq}
    \ENDIF
\ENDFOR
\FOR{\textbf{all} $k \in [m]$}
    \STATE $\mathcal{S}^k_i$ $\gets$ Topological sort all accepted relationships
    \STATE $N_i^k \gets$ Length of $\mathcal{S}^k_i$
    \STATE $\mathbb{C}_i^k$ $\gets$ Participants define sorted $(f+1)$-th to $(N_i^k-f)$-th as Contributors.
\ENDFOR
\ENDFOR
\end{algorithmic}
\end{algorithm}

\subsection{Stage 4: Aggregation and Verification}\label{subsec: stage 3}

In this stage, $P_i$ generates a private one-time pad \cite{bonawitz2017practical}, and applies it to mask the selected contributing coordinates of its local model before transmitting them via BFT broadcast. After receiving the masked coordinates from other participants, each $P_i$ adds the masked values coordinate-wise and verifies authenticity leveraging the commitment scheme. Finally, each $P_i$ computes the global model if the summations of the masked values are authenticated. $P_i$ can also identify the Byzantine adversaries if the summations are not authenticated using \textit{blame}.

\begin{algorithm}[h!]
\caption{Stage 4: Aggregation and Verification}
\label{Participant Only: algorithm sum}
\begin{algorithmic}[1]
\FORALL{$P_i \in \mathbb{P}$ in parallel}
    \FORALL{$P_j \in \mathbb{P} \setminus \{P_i\}$}
        \STATE $a_{ij}=s^1_{ij}+s^1_{ji}$; $r^{a}_{ij}=r^{s1}_{ij}+r^{s1}_{ij}$
        \STATE $b_{ij}=s^2_{ij}+s^2_{ji}$; $r^{b}_{ij}=r^{s2}_{ij}+r^{s2}_{ij}$
        \STATE $\mathcal{A}_{ij}=\mathcal{C}(a_{ij},r^{a}_{ij})$; $\mathcal{B}_{ij}=\mathcal{C}(b_{ij},r^{b}_{ij})$
    \ENDFOR
    \STATE $\mathcal{A}_{i}, \mathcal{B}_{i} \gets \text{Pack all } \mathcal{A}_{ij} \text{ and } \mathcal{B}_{ij}$
    \FORALL{$k \in [m]$}
        \IF{$P_i \in \mathbb{C}_i^k$}
            \STATE $\bar{w}_i^k=\left(w_i^k-\sum_{j \in \mathbb{C}^k_{-i}} a^k_{i j}+\sum_{j \in \mathbb{C}^k_{+i}} a^k_{i j}\right) \bmod \beta$
            \STATE $\bar{r}_i^k=\left(r_i^k-\sum_{j \in \mathbb{C}^k_{-i}} b^k_{i j}+\sum_{j \in \mathbb{C}^k_{+i}} b^k_{i j}\right) \bmod \beta$
        \ENDIF
    \ENDFOR
    \STATE \texttt{BFT-Broadcast}($\bar{w}_i$, $\bar{r}_i$, $\mathcal{A}_{i}$, $\mathcal{B}_{i}$)
    \FORALL{$k \in [m]$}
        \STATE $\bar{w}^k=\sum_{P_j \in \mathbb{C}_i^k} \bar{w}_j^k  \bmod \beta$
        \STATE $\bar{r}^k=\sum_{P_j \in \mathbb{C}_i^k} \bar{r}_j^k  \bmod \beta$
        \IF{$\mathcal{C}(\bar{w}^k,\bar{r}^k) = \prod_{P_j \in \mathbb{C}_i^k} \mathcal{W}_j^k$}
            \STATE $w^k=\frac{\bar{w}^k}{N_i^k-2f}$
        \ENDIF
    \ENDFOR
\ENDFOR
\end{algorithmic}
\end{algorithm}

\begin{algorithm}[h!]
\caption{Blame}
\label{Participant Only: algorithm blame}
\begin{algorithmic}[1]
    \IF{$\mathcal{C}(\bar{w}^k,\bar{r}_j^k) \neq \prod_{P_j \in \mathbb{C}_i^k} \mathcal{W}_j^k$}
        \FORALL{$P_j \in \mathbb{C}_i^k$}
            \IF{$\mathcal{C}(\bar{w}_j^k,\bar{r}^k) \neq \mathcal{W}_j^k \prod_{P_h \in \mathbb{C}^k_{-i}} \mathcal{A}_{ih}^k \prod_{P_h \in \mathbb{C}^k_{+i}} (\mathcal{A}_{ih}^k)^{-1}$}
                \STATE Flag $P_j$ as Byzantine adversary
            \ENDIF
        \ENDFOR
    \ENDIF
\end{algorithmic}
\end{algorithm}
\section{Theoretical Analysis of Brave}
\label{sec: Byzantine Resilience Analysis}
In this section, we theoretically analyze the information-theoretic privacy and Byzantine resilience of \texttt{Brave}.

\subsection{Proof of Information-Theoretic Privacy}

We show that when the passive adversaries are not colluding, \texttt{Brave} guarantees P2P FL to be privacy-preserving as follows.
\begin{theorem}[Information-theoretic Privacy]
    Consider a P2P FL in the presence of passive adversaries who are not colluding. Our proposed protocol, \texttt{Brave}, guarantees information-theoretic privacy of the participants' local models.
\end{theorem}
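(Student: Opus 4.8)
The plan is to prove privacy stage by stage, showing that every message a single (non-colluding) passive adversary observes is distributed in a way that is consistent with \emph{any} value of the target model $w_i$, so that the adversary's posterior over $w_i$ equals its prior. Concretely, I would classify each transmitted quantity into one of three types: (i) a Pedersen commitment, which is information-theoretically hiding by Lemma~\ref{lemma Pedersen commitment}; (ii) a value masked by a freshly sampled uniform vector over $\mathbb{Z}_q^m$ (a one-time pad), hence itself uniform and independent of the masked secret; or (iii) a sum of random helpers that inherits uniformity from an independent helper. The conclusion follows from a simulation/consistency argument: for any hypothesized $w_i'$, I exhibit an assignment of the \emph{unobserved} randomness (the $r$-helpers and $v$-masks) reproducing the entire observed transcript exactly, so no transcript distinguishes $w_i$ from $w_i'$. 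Since the adversaries do not collude, it suffices to reason about a single adversary's view.

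First I would handle Stage~1: the only broadcast is $\mathcal{W}_i=\mathcal{C}(w_i,r^w_i)$, hiding by Lemma~\ref{lemma Pedersen commitment}; because $r^w_i$ is uniform, every $w_i'$ admits an $r'^w_i$ with $g^{w_i'}h^{r'^w_i}=\mathcal{W}_i$. For Stage~2 I would treat each component of the pairwise message in turn: the masked model $c_{ij}=w_i+v_{ij}$ is a one-time-pad encryption under the uniform $v_{ij}$, hence uniform; $\mathcal{V}_{ij}$ is hiding; the helper sum $r^{w+v}_{ij}=r^w_i+r^v_{ij}$ is masked by the uniform $r^v_{ij}$; and the padding vectors $s^1_{ij},s^2_{ij},r^{s1}_{ij},r^{s2}_{ij}$ are independent of $w_i$ by construction. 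The verification relation \eqref{equation verification} only confirms consistency and exposes no new value, and the broadcast relation $\mathcal{R}_{pq}$ carries only the intended ordering $\bowtie$, not any magnitude. Finally, in Stage~4 I would invoke the secure-aggregation structure of \eqref{equation one pad}: the pad $a_{ij}=s^1_{ij}+s^1_{ji}$ masking $\bar{w}_i$ is composed of padding randomness that a non-colluding adversary knows only for pairs involving itself, so for every other participant at least one unknown uniform pad term remains; thus each cloaked model $\bar{w}_i$ and helper $\bar{r}_i$ hides its summand while the associated commitments $\mathcal{A}_i,\mathcal{B}_i$ are hiding.

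The step I expect to be the main obstacle is Stage~2, because the observed quantities are \emph{not} mutually independent. The verification equation ties $c_{ij}$, $\mathcal{V}_{ij}$, and $r^{w+v}_{ij}$ together, and the forwarded messages $m_{ij}=[d_{ij},r^d_{ij},\mathcal{V}_{ij},\mathcal{V}_{ji}]$ satisfy $d_{ij}=v_{ij}+v_{ji}+w_j$, so for a pair $(p,q)$ with $p,q\neq k$ the adversary can cancel the common uniform term $v_{pq}+v_{qp}$ and recover the difference $w_q-w_p$ that sorting must expose. The care required is to show that this correlated view still leaves each individual model undetermined. I would make the consistency argument joint rather than per-message: given the full transcript, I solve for the hidden $v$- and $r$-values that a hypothesized $w_i'$ induces and verify, using the homomorphic identity \eqref{equation additively homomorphic} and the fact that $h$ is a generator, that they simultaneously satisfy every commitment and verification equation. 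Since only relative orderings—and at most pairwise differences determined up to an unrecoverable global additive shift—are ever fixed by the transcript, the absolute value $w_i$ remains information-theoretically hidden, which establishes the claim.
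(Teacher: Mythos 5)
Your first two paragraphs are, in substance, the paper's own proof: the paper's proof sketch likewise splits the argument into (i) the information-theoretically hiding property of the Pedersen commitment (Lemma~\ref{lemma Pedersen commitment}) and (ii) the observation that every transmitted model value is masked by fresh uniform randomness ($v_{ij}$ in Algorithm~\ref{Participant Only: algorithm aggregator comparison and sorting}, the pad sums $a^k_{ij}$ in Algorithm~\ref{Participant Only: algorithm sum}). Under the paper's literal definition of information-theoretic privacy---the adversary observes \emph{the messages sent by} $P_i$---that per-message argument already establishes the theorem, and your rendering of it is more careful than the paper's.

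The gap lies in the stronger, joint-view claim that you (rightly) identify as the real issue and then resolve incorrectly. You assert that the full transcript fixes ``at most pairwise differences determined up to an unrecoverable global additive shift.'' That is false once Stage~4 is taken into account. The pad $a^k_{ij}$ (the $k$-th coordinate of $a_{ij}=s^1_{ij}+s^1_{ji}$) enters $\bar w_i^k$ and $\bar w_j^k$ with opposite signs in Eqn.~\eqref{equation one pad}, so the pads cancel when the broadcast cloaked values are summed: $\sum_{P_j\in\mathbb{C}^k}\bar w_j^k \equiv \sum_{P_j\in\mathbb{C}^k} w_j^k \pmod{\beta}$. This cancellation is the entire purpose of Stage~4---the aggregate is the protocol's public output, computed and verified in the clear via Eqn.~\eqref{equation final verification}. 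Combine this with the Stage-2 leakage you yourself computed: every third party obtains $d^k_{qp}-d^k_{pq}=w_p^k-w_q^k$ for each pair of other participants. Knowing all pairwise differences among the contributors \emph{and} their exact sum determines each individual model, since $|\mathbb{C}^k|\,w_p^k=\bar w^k+\sum_{P_j\in\mathbb{C}^k}\bigl(w_p^k-w_j^k\bigr)$, where every term on the right-hand side is known to the adversary. Hence the ``global additive shift'' is recoverable, and for a contributor $P_i$ there is no assignment of the hidden $v$'s and $r$'s consistent with any hypothesized $w_i'\neq w_i$: by your own $d$-equations the shift would have to propagate to all hidden models simultaneously, and it would then change the public aggregate, which is fixed by the transcript. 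Your simulation argument therefore cannot be completed, and the joint-view statement you set out to prove is not actually true of this protocol; the theorem can only be defended under the paper's narrow per-sender definition, for which your first two paragraphs (like the paper's sketch) suffice and the global-shift argument is unnecessary.
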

\begin{proof}[Proof Sketch]
The proof can be divided into two parts.
First, Lemma \ref{lemma Pedersen commitment} shows the information-theoretically hiding property of the commitment, i.e., the commitment itself contains no information about the local model. Second, during the execution of the protocol, the local model is always masked with some random vectors $v_{ij}$ in Algorithm \ref{Participant Only: algorithm aggregator comparison and sorting} and $\sum_{j \in \mathbb{C}^k_{+i}} a^k_{i j}-\sum_{j \in \mathbb{C}^k_{-i}} a^k_{i j}$ in Algorithm \ref{Participant Only: algorithm sum}, which are also information-theoretically hiding by our protocol design.
Therefore, the masked local model is independent of the local model.
\end{proof}





\subsection{Proof of Byzantine Resilience}

In what follows, we establish the Byzantine resilience of P2P FL after applying \texttt{Brave} if $N>3f+2$. 
We first show that using \texttt{Brave}, the messages broadcast by benign participants preserve the ordering of claimed local models.
The preservation of order prevents Byzantine adversaries from forging the results of sorting by manipulating their claimed local models.

\begin{lemma}[Local Non-Forgeability]
	If a benign participant $P_i$ broadcasts $w^k_p>w^k_q$ at coordinate $k$, then the claimed local model of $P_p$ w.r.t. to $\mathcal{W}_p$ is greater than that of $P_q$ at coordinate $k$.
	\label{property non-forgeability}
\end{lemma}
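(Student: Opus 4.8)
The plan is to trace exactly how the inequality $w^k_p > w^k_q$ becomes accepted by the benign participant $P_i$, and to show that at each step the commitment scheme forces the claimed values to be the ones actually compared. First I would recall that by the protocol, $P_i$ broadcasts $w^k_p > w^k_q$ only after it has verified the pair of relations in Eqn.~\eqref{equation comparison} for coordinate $k$, using the commitments $\mathcal{V}_{pq}, \mathcal{V}_{qp}, \mathcal{W}_p, \mathcal{W}_q$. The key algebraic observation is that $d_{pq}$ and $d_{qp}$ share the common masking term $v_{pq}+v_{qp}$, so that $d_{pq} - d_{qp} = w_q - w_p$ coordinate-wise; hence the comparison of $d^k_{pq}$ against $d^k_{qp}$ reveals precisely the ordering of $w^k_q$ against $w^k_p$ \emph{provided} the values $w_p, w_q$ implicit in $d_{pq}, d_{qp}$ are the claimed local models w.r.t.\ $\mathcal{W}_p, \mathcal{W}_q$.

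The heart of the argument is therefore to pin down that the $w_p^k$ (resp.\ $w_q^k$) appearing in $d_{pq}$ (resp.\ $d_{qp}$) is forced to equal the claimed local model of $\mathcal{W}_p$ (resp.\ $\mathcal{W}_q$). Here I would invoke the verification checks that $P_i$ performs, namely $\mathcal{C}(d_{pq},r^d_{pq}) = \mathcal{V}_{pq}\cdot\mathcal{V}_{qp}\cdot\mathcal{W}_q$ and $\mathcal{C}(d_{qp},r^d_{qp}) = \mathcal{V}_{qp}\cdot\mathcal{V}_{pq}\cdot\mathcal{W}_p$. Using the additively homomorphic property (Lemma~\ref{lemma Pedersen commitment}, Eqn.~\eqref{equation additively homomorphic}), the right-hand side equals $\mathcal{C}(v_{pq}+v_{qp}+\tilde{w}_q,\,\cdot)$ where $\tilde{w}_q$ is the claimed model of $\mathcal{W}_q$. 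By the computationally binding property of the Pedersen commitment, a passing check guarantees that the opened value $d_{pq}$ indeed equals $v_{pq}+v_{qp}+\tilde{w}_q$ (else the adversary would have produced two distinct openings of a commitment, contradicting binding under the discrete-logarithm hardness assumption in the threat model). The same reasoning applied to $d_{qp}$ yields $d_{qp}=v_{qp}+v_{pq}+\tilde{w}_p$, so that $d^k_{pq}-d^k_{qp}=\tilde{w}^k_q-\tilde{w}^k_p$.

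With both openings pinned to the claimed models, I would close the argument as follows. Since $P_i$ broadcasts $w^k_p > w^k_q$, by the protocol this means $d^k_{qp} < d^k_{pq}$ was observed (after checking $\mathcal{V}_{pq}=\mathcal{V}_{qp}$ consistency so the common mask is genuinely shared), which by the displayed difference gives $\tilde{w}^k_p - \tilde{w}^k_q = d^k_{pq}-d^k_{qp} > 0$, i.e.\ the claimed local model of $P_p$ exceeds that of $P_q$ at coordinate $k$, as required. I would also note the consistency check that $\mathcal{V}_{pq}$ (resp.\ $\mathcal{V}_{qp}$) received from the two sources agree, which rules out an adversary supplying mismatched masks that would break the cancellation of the common term.

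I expect the main obstacle to be the careful handling of the binding property: one must argue that passing the commitment equations forces the \emph{claimed} values, not merely some values, to satisfy the inequality, and that a Byzantine $P_p$ or $P_q$ cannot substitute a different effective local model in $d_{pq}, d_{qp}$ than the one committed in $\mathcal{W}_p, \mathcal{W}_q$ during Stage~1. This is precisely where computational binding under the discrete-logarithm assumption is essential, and the proof should make explicit that any successful forgery would yield two openings of a Pedersen commitment, contradicting Lemma~\ref{lemma Pedersen commitment}. A secondary subtlety is confirming that the shared mask $v_{pq}+v_{qp}$ is identical in both $d_{pq}$ and $d_{qp}$, which relies on the equality check of the $\mathcal{V}$ commitments flagged in Stage~2; without it the cancellation that reveals the true ordering would not be guaranteed.
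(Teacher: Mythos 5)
Your proof is correct in substance and takes essentially the same route as the paper's: both arguments rest on the computational binding of the Pedersen commitment (Lemma \ref{lemma Pedersen commitment}) — which prevents a Byzantine $P_p$ or $P_q$ from opening $\mathcal{W}_p$, $\mathcal{W}_q$, or the mask commitments to anything other than the committed values — together with the cancellation of the common mask $v_{pq}+v_{qp}$ in Eqn.~\eqref{equation comparison}. You actually spell out the mechanics more explicitly than the paper does: the paper's proof is a terse case split (both benign, one Byzantine, both Byzantine) observing that forging either the model values or the mask sum would require finding a second opening of a commitment, whereas you show directly that the passing homomorphic checks pin $d_{pq}$ and $d_{qp}$ to the claimed models, which handles all three cases uniformly.

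One slip you should fix: your final paragraph contains two sign errors that happen to cancel. Since $d_{pq}=v_{pq}+v_{qp}+w_q$ carries $w_q$ (not $w_p$), the observation that leads $P_i$ to broadcast $w^k_p>w^k_q$ is $d^k_{qp}>d^k_{pq}$, not $d^k_{qp}<d^k_{pq}$ as you wrote; and by your own identity $d^k_{pq}-d^k_{qp}=\tilde{w}^k_q-\tilde{w}^k_p$, the correct final step reads $\tilde{w}^k_p-\tilde{w}^k_q=d^k_{qp}-d^k_{pq}>0$, not $d^k_{pq}-d^k_{qp}>0$. The two reversals compensate, so your conclusion stands, but each intermediate statement as written contradicts the identity you derived two sentences earlier.
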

\begin{proof}
Consider three cases when i) both $P_p$ and $P_q$ are benign, ii) either $P_p$ or $P_q$ is Byzantine, iii) both $P_p$ and $P_q$ are Byzantine.
We show the proof of case iii), and similar approaches can be applied to cases i) and ii).

Suppose that in coordinate $k$, $w^k_{p}>w^k_{q}$ holds. To pass the authentication of a benign $P_i$, $P_p$ and $P_q$ must find \textit{a)} another valid pairs $w_{q}'$ or $w_{p}'$ such that $w^{k'}_{q}>w^{k'}_{p}$ but $w^k_{q}<w^k_{p}$, or
\textit{b)} another valid vector $v_{qp}'+v_{pq}'$ such that $v^{k'}_{qp}+v^{k'}_{pq}-v^k_{qp}-v^k_{pq}>w^k_{p}-w^k_{q}$.
However, Lemma \ref{lemma Pedersen commitment} indicates that the Pedersen commitment scheme is computational binding, i.e., a Byzantine participant can not find another value that opens the same commitment.
Therefore, both \textit{a)} and \textit{b)} are infeasible, and the comparison result of $d_{pq}$ and $d_{qp}$ truly reflects the relationship between the claimed local models $w_p$ and $w_q$.
\end{proof}

\textbf{Remark.} In Lemma \ref{property non-forgeability}, we prove that the relationship accepted by benign participants preserves the correct ordering of the claimed local models.

Next, we move from a local perspective to a holistic one. We first show below the Byzantine-resilience of BFT broadcast if $N>3f+2$. We then prove the adversary cannot forge a comparison, as shown in Proposition \ref{property Global Non-Forgeability}. 

\begin{lemma}
If $N>3f+2$, then the messages sent by a participant via BFT broadcast will be received identically by benign participants.
\label{lemma BFT agreement}
\end{lemma}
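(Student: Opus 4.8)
The plan is to establish this agreement property through a quorum-intersection argument built on the acceptance rule stated in Appendix~\ref{Appendix: Design Specification}: a participant incorporates a proposal into its local information set only after more than $N-f$ participants vote in its favor. First I would fix an arbitrary sender and suppose, toward a contradiction, that two benign participants $P_a$ and $P_b$ accept two distinct values $v$ and $v'$ attributed to that sender.

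Next I would extract the two voting quorums. Since $P_a$ accepts $v$, there is a set $Q_1$ of more than $N-f$ participants that voted for $v$; likewise $P_b$'s acceptance of $v'$ yields a set $Q_2$ of more than $N-f$ voters for $v'$. Because all voters lie among the $N$ participants, $|Q_1\cup Q_2|\leq N$, so by inclusion--exclusion $|Q_1\cap Q_2|\geq |Q_1|+|Q_2|-N > 2(N-f)-N = N-2f$. Discarding the at most $f$ Byzantine participants from this intersection leaves strictly more than $N-3f$ benign participants that voted for both $v$ and $v'$. Since $N>3f+2$ gives $N-3f>2>0$, this set is nonempty, so at least one benign participant supported two conflicting values.

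The contradiction then follows from the honest behavior of benign participants: a benign participant follows the protocol and casts at most one vote per proposal, so it cannot have supported both $v$ and $v'$. Hence no two benign participants accept distinct values, which is precisely the claimed agreement. For completeness I would also dispatch the validity direction, namely that a message genuinely broadcast by a benign sender is received: the at least $N-f$ benign participants all vote for that sender's single proposal, so the acceptance threshold is met at every benign node.

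I expect the main obstacle to be conceptual rather than computational: making precise the sense in which ``received identically'' is defined when the underlying BFT primitive (PBFT or Hotstuff) is treated as a black box. The cleanest route is to argue directly from the stated $>N-f$ acceptance threshold so that the proof is self-contained. Alternatively, one could simply invoke the standard safety guarantee of these consensus protocols, which already holds under $N\geq 3f+1$; this is implied by the stronger hypothesis $N>3f+2$ imposed throughout \texttt{Brave}, so no additional tightness is required here.
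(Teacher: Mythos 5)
Your proof is correct in its essentials, but it takes a genuinely different route from the paper. The paper disposes of this lemma in one line: it invokes the classical result of Lamport--Shostak--Pease that Byzantine consensus is achievable without authentication under synchrony whenever $N>3f$, and then observes $N>3f+2>3f$. You instead give a self-contained quorum-intersection argument built on the acceptance rule stated in the appendix (incorporate a proposal once more than $N-f$ participants support it): two conflicting quorums would intersect in more than $N-2f$ participants, hence in more than $N-3f>0$ benign ones, and a benign participant votes only once per proposal --- contradiction. Your approach buys transparency: it explains \emph{why} the stated threshold yields agreement, rather than outsourcing the guarantee to a cited protocol, and it would survive even if the reader does not trust the black-box BFT module. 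The paper's approach buys brevity and also implicitly covers what your single-round argument elides: real BFT protocols (PBFT, HotStuff) involve view changes and multiple voting phases, so safety is not literally a one-round quorum fact; citing the classical bound (or the protocols' own safety theorems) covers the full machinery. One small slip in your validity paragraph: with exactly $N-f$ benign participants, their votes meet the threshold ``$\geq N-f$'' but not the strict ``$>N-f$'' you (following the paper's loose wording) adopt; with the standard quorum size $N-f\geq 2f+1$ the argument goes through unchanged, and your intersection bound is unaffected since two quorums of size at least $N-f$ still share more than $N-3f-1\geq 1$ benign members under $N>3f+2$.
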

\begin{proof}
According to \cite{lamport2019byzantine}, consensus can be reached if $N > 3f$ without authentication under synchronous network assumption, and identical information can be received. Thus the lemma follows given $N>3f+2>3f$.
\end{proof}

\begin{proposition}[Global Non-Forgeability]
\label{property Global Non-Forgeability}
	If $w^k_p>w^k_q$ is accepted by a benign participant $P_i$ using Algorithm \ref{Participant Only: algorithm aggregator comparison and sorting}, then the claimed local model of $P_p$ w.r.t. to $\mathcal{W}_p$ is greater than that of $P_q$ at coordinate $k$ given $N>3f+2$.
\end{proposition}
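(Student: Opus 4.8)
The plan is to lift the per-broadcaster guarantee of Lemma~\ref{property non-forgeability} (Local Non-Forgeability) to the aggregate acceptance rule of Algorithm~\ref{stage 3: contributor selection} through a counting argument, using Lemma~\ref{lemma BFT agreement} to guarantee that the tallies $P_i$ computes are reliable. First I would invoke Lemma~\ref{lemma BFT agreement}: since $N>3f+2$, every relationship broadcast via BFT broadcast is delivered identically and authentically to all benign participants. Each broadcast is attributed to a unique sender, so no Byzantine participant can inflate a tally by replaying a relationship or impersonating another node; hence the count $P_i$ obtains for $w^k_p>w^k_q$ equals the number of \emph{distinct} participants that genuinely broadcast it.

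Next I would unpack the acceptance rule on Line~\ref{line count} of Algorithm~\ref{stage 3: contributor selection}: $P_i$ accepts $w^k_p>w^k_q$ only if it appears more than $2f$ times, i.e., at least $2f+1$ distinct participants broadcast it. Because at most $f$ participants are Byzantine, at least $(2f+1)-f=f+1\geq 1$ of these broadcasters are benign. I would then apply Local Non-Forgeability: by Lemma~\ref{property non-forgeability}, a benign participant broadcasts $w^k_p>w^k_q$ only when the claimed local model of $P_p$ w.r.t. $\mathcal{W}_p$ truly exceeds that of $P_q$ at coordinate $k$. The existence of at least one benign broadcaster therefore certifies the ordering, which establishes the proposition.

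The main obstacle is not the arithmetic but ruling out a forged tally: I must ensure Byzantine participants cannot drive a \emph{false} relationship past the $2f$ threshold. This is precisely why authenticity of BFT broadcast (Lemma~\ref{lemma BFT agreement}) is essential and why the threshold is set strictly above $2f$. A false relationship can be broadcast only by Byzantine participants—benign ones are truthful by Local Non-Forgeability—and there are at most $f$ of them, so a false relationship accrues at most $f<2f+1$ broadcasts and can never be accepted. Equivalently, every accepted relationship is supported by at least $f+1$ benign broadcasters and is therefore true.
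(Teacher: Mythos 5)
Your proof is correct and follows essentially the same route as the paper's: both reduce acceptance of a relationship to the existence of at least one benign broadcaster (since the acceptance threshold exceeds the number $f$ of Byzantine participants) and then invoke Lemma~\ref{property non-forgeability} (Local Non-Forgeability) to conclude the relationship is true. The only cosmetic difference is that you argue directly while the paper argues by contradiction, and you make explicit the reliance on Lemma~\ref{lemma BFT agreement} for consistent tallies, which the paper leaves implicit.
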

\begin{proof}
If $P_p$ and $P_q$ are both benign, the above argument trivially holds. Then we prove by contradiction to show that the Byzantine adversary cannot forge the result. Assume $w^k_p > w^k_q$ is true, and a benign participant receives a wrong relationship $w^k_p < w^k_q$ for more than $f$ times. Recall that there are at most $f$ faulty participants, so there is at least one benign participant that also broadcast $w^k_p < w^k_q$, which contradicts Lemma \ref{property Global Non-Forgeability}.
\end{proof}

We next prove the liveness property of \texttt{Brave}, i.e., \texttt{Brave} allows the benign participants to accept the true relationships
between their local models, regardless of the adversaries' behaviors during training.


\begin{proposition}[Liveness]
\label{property benign closedness}
Suppose $P_p$ and $P_q$ are benign participants and assume w.l.o.g. $w^k_p>w^k_q$. If $N>3f+2$ holds, then Algorithm \ref{Participant Only: algorithm aggregator comparison and sorting} will guarantee that $w^k_p>w^k_q$ is accepted by all benign participants.
\end{proposition}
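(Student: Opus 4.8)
The plan is to lower-bound the number of benign participants that independently compute and broadcast the true relationship $w_p^k>w_q^k$, and to show this number strictly exceeds the acceptance threshold $2f$ of Stage~3 precisely when $N>3f+2$. The argument has two ingredients: a counting argument over which participants are able to form the comparison of the pair $(P_p,P_q)$, and an appeal to Lemma~\ref{lemma BFT agreement} so that every benign participant observes the same multiset of broadcast relationships.

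First I would pin down the set of participants capable of computing the comparison. In Stage~2, $P_p$ transmits $m_{pq}$ to $\mathbb{P}\setminus\{P_q\}$ and $P_q$ transmits $m_{qp}$ to $\mathbb{P}\setminus\{P_p\}$; hence $P_p$ never receives $m_{qp}$ and $P_q$ never receives $m_{pq}$, so neither can form the comparison. Every remaining participant $P_i$ with $i\neq p,q$ receives both messages, so the set of potential reporters is exactly $\mathbb{P}\setminus\{P_p,P_q\}$, of size $N-2$. Since $P_p$ and $P_q$ are benign, at most $f$ of these $N-2$ participants are Byzantine, leaving at least $N-2-f$ benign reporters.

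Next I would show that each benign reporter broadcasts the correct relationship. Because $P_p$ and $P_q$ are benign, the messages $m_{pq}$ and $m_{qp}$ are well-formed, so the $\mathcal{V}$-consistency check and the homomorphic commitment checks of Algorithm~\ref{Participant Only: algorithm aggregator comparison and sorting} pass by the additively homomorphic property of Lemma~\ref{lemma Pedersen commitment}. Each benign $P_i$ then evaluates Eqn.~\eqref{equation comparison}, whose common term $v_{pq}+v_{qp}$ cancels, recovering the genuine coordinate-wise order; by Lemma~\ref{property non-forgeability} this is exactly $w_p^k>w_q^k$, which $P_i$ broadcasts. By Lemma~\ref{lemma BFT agreement}, valid under $N>3f+2$, each such broadcast is delivered identically to all benign participants. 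Consequently every benign participant counts the relationship $w_p^k>w_q^k$ at least $N-2-f$ times. The hypothesis $N>3f+2$ is equivalent to $N-2-f>2f$, so the count strictly exceeds $2f$ and the acceptance rule on line~\ref{line count} of Algorithm~\ref{stage 3: contributor selection} fires, yielding acceptance by all benign participants.

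The main obstacle is making the counting tight so that the threshold matches the resilience bound exactly. One must justify that $P_p$ and $P_q$ are genuinely excluded from the reporter set (they lack the cross message), that the at-most-$f$ Byzantine participants can neither be miscounted as honest reporters nor block the honest ones---the point-to-point messages originate directly from benign $P_p,P_q$ and BFT delivery is guaranteed by Lemma~\ref{lemma BFT agreement}---and that the verification gates do not spuriously reject any honest report. Together these force the clean arithmetic $N-2-f>2f\iff N>3f+2$, which is precisely where the stated bound enters.
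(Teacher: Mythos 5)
Your proof is correct and takes essentially the same route as the paper's: both rest on the counting argument that, since $P_p$ and $P_q$ are benign, the at least $(N-2)-f>2f$ benign participants among the remaining $N-2$ will compute and broadcast the true relationship $w^k_p>w^k_q$, which all benign participants then receive identically via BFT broadcast and accept because the count exceeds the $2f$ threshold. Your added details (why $P_p$ and $P_q$ are excluded from the reporter set, why the commitment verification gates pass for well-formed messages) only make explicit what the paper's terser proof leaves implicit.
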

\begin{proof}
Since $P_p$ and $P_q$ are benign, they will follow the protocol and send $m_{pq}$ and $m_{pq}$ that can be authenticated to other participants. As $N>3f+2$, apart from $P_p$ and $P_q$, there are at least $(N-2)-f>2f$ benign nodes that will respond with $w^k_p>w^k_q$. Therefore, all benign participants would receive more than $2f$ times $w^k_p>w^k_q$ and accept $w^k_p>w^k_q$.
\end{proof}

Given the aforementioned guarantees provided by \texttt{Brave}, we show that the benign $P_i$ and $P_j$ can reach an agreement on relationship $\bowtie$ during Stage 2.

\begin{proposition}[Agreement on Pairwise Relationships]
	The accepted relationships of all benign participants are identical given $N>3f+2$.
	\label{property Agreement on Pairwise Comparison Result}
\end{proposition}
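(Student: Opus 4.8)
The plan is to reduce the agreement claim to the consistency guarantee of BFT broadcast, observing that the acceptance rule of Stage 3 is a deterministic function of the relationships a participant receives through that channel. Recall from Stage 2 (Algorithm \ref{Participant Only: algorithm aggregator comparison and sorting}, line \ref{algorithm line BB for relation}) that every relationship $\mathcal{R}_{pq}$ is disseminated via \texttt{BFT-Broadcast}, and from Stage 3 (Algorithm \ref{stage 3: contributor selection}, lines \ref{line count}--\ref{codeline accept wp>wq}) that a benign participant accepts $w^k_p \bowtie w^k_q$ precisely when this relationship appears more than $2f$ times among the messages it has received.

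First I would invoke Lemma \ref{lemma BFT agreement}: since $N>3f+2$, any message sent via BFT broadcast—whether issued by a benign or a Byzantine participant—is delivered identically to every benign participant. Applying this to each relationship message $\mathcal{R}_{pq}$ individually shows that the entire multiset of relationship messages received by a benign participant $P_i$ coincides with that received by any other benign participant $P_j$. In particular, for every pair $(p,q)$, every coordinate $k$, and every direction $\bowtie \in \{<,>\}$, the number of occurrences of $w^k_p \bowtie w^k_q$ in $P_i$'s received messages equals the corresponding count at $P_j$. Second, I would observe that the acceptance predicate of Stage 3—accept $w^k_p \bowtie w^k_q$ iff its count exceeds the fixed threshold $2f$—depends only on these per-relationship counts, which are now identical across all benign participants. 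Since identical inputs fed into the same deterministic decision rule yield identical outputs, the set of relationships accepted by $P_i$ equals that accepted by $P_j$, which is exactly the asserted agreement.

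The main obstacle is to rule out that a Byzantine participant induces two benign participants to tally the relationships differently. Although the point-to-point exchange of the masked quantities $m_{ij}$ in Stage 2 is \emph{not} consistency-protected—so a Byzantine $P_p$ may send conflicting messages to different benign participants, causing them to \emph{compute} different local relationships—this does not threaten agreement, because the acceptance counting in Stage 3 operates exclusively on the relationships delivered through BFT broadcast. By Lemma \ref{lemma BFT agreement} those deliveries are identical at every benign participant, so the tallies, and hence the accepted relationships, coincide. The argument therefore rests on cleanly separating the two communication channels and confirming that only the BFT-broadcast channel feeds the counting step; correctness of the accepted relationships themselves is a separate matter already settled by Proposition \ref{property Global Non-Forgeability}.
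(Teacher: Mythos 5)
Your proof is correct and follows essentially the same route as the paper, which likewise reduces the claim to the consistency guarantee of BFT broadcast (Lemma \ref{lemma BFT agreement}) applied to the broadcast relationships $\mathcal{R}_{pq}$. Your write-up merely makes explicit what the paper leaves implicit—that Stage~3's acceptance rule is a deterministic function of the identically delivered broadcast messages, and that the unprotected point-to-point channel never feeds the counting step—so it is a more careful rendering of the same argument rather than a different one.
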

\begin{proof}
	This property is guaranteed by the BFT broadcast given $N>3f$, as shown in Lemma \ref{lemma BFT agreement}.
\end{proof}

With Proposition \ref{property Global Non-Forgeability} and \ref{property Agreement on Pairwise Comparison Result}, we can now conclude that the relationships among benign participants are \textit{correct} and \textit{identical}. Therefore, an agreement on sorting can be reached.

\begin{corollary}[Agreement on Sorting]
	Given any benign participants $P_i$ and $P_j$, if $N>3f+2$ holds, then \texttt{Brave} guarantees the sorting results $\mathcal{S}^k_i=\mathcal{S}^k_j$ for all $k$, i.e., the sorting of all benign participants is identical.
	\label{property agreement on ranking}
\end{corollary}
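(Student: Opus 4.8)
The plan is to reduce the claim to two facts that are already available: that benign participants accept \emph{identical} sets of pairwise relationships, and that those relationships are \emph{correct}. The starting observation is that in Stage 3 each benign $P_i$ produces $\mathcal{S}^k_i$ deterministically from a single object, namely the set of relationships at coordinate $k$ that it \emph{accepts} (those $w^k_p\bowtie w^k_q$ appearing more than $2f$ times in the BFT broadcast), after which it topologically sorts the claimed models according to this set. Hence $\mathcal{S}^k_i$ is a function of the accepted relationship set alone, and it suffices to show (a) that $P_i$ and $P_j$ accept the same relationships and (b) that this common set induces a single well-defined ordering.

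For (a), I would invoke Proposition \ref{property Agreement on Pairwise Comparison Result}: every $\mathcal{R}_{pq}$ is disseminated through the BFT broadcast, so Lemma \ref{lemma BFT agreement} (valid since $N>3f+2>3f$) guarantees that all benign participants receive an identical collection of broadcast relationships. Applying the same ``more than $2f$ times'' acceptance threshold to identical inputs yields identical accepted sets at every coordinate $k$, so the two topological sorts are run on the same data.

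For (b), I would appeal to Proposition \ref{property Global Non-Forgeability}: every accepted relationship $w^k_p>w^k_q$ faithfully reflects the ordering of the claimed models committed in Stage 1. Because the claimed values lie in $\mathbb{Z}_q$ with ties broken arbitrarily, the accepted relations embed into a strict total order and are therefore acyclic and mutually consistent; the topological sort never fails on a cycle, and by the determinism of the procedure it returns the same sequence for $P_i$ and $P_j$. Since both restrict to the identical set of participants that the accepted relations can place, I conclude $\mathcal{S}^k_i=\mathcal{S}^k_j$ for all $k$.

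The step I expect to be the main obstacle is the well-definedness and uniqueness claim in (b): a topological sort of an arbitrary partial order may admit several linearizations or may abort on a cycle, so the argument must lean on correctness (Proposition \ref{property Global Non-Forgeability}) to exclude cycles and on the no-ties convention to pin the order down unambiguously. A secondary technicality to handle is that $\mathcal{S}^k_i$ is defined over $\mathbb{P}\setminus\{P_i\}$ while $\mathcal{S}^k_j$ is defined over $\mathbb{P}\setminus\{P_j\}$; here I would note that the relations involving $P_i$ and $P_j$ are themselves computed by third parties, broadcast, and accepted by both, so the two lists agree as ordered sequences on their common elements and, after trimming the extreme $f$ entries in each coordinate, induce the same contributor sets used downstream in Stage 4.
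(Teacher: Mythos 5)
Your core argument is the same one the paper uses: the paper states this corollary as an immediate consequence of Proposition \ref{property Global Non-Forgeability} (relationships are \emph{correct}) and Proposition \ref{property Agreement on Pairwise Comparison Result} (relationships are \emph{identical}), and your steps (a) and (b) — identical accepted sets via BFT broadcast (Lemma \ref{lemma BFT agreement}), plus correctness implying the accepted relations embed in a strict total order so the deterministic topological sort cannot cycle and returns the same sequence for everyone — fill in exactly that reasoning. Up to this point the proposal is sound and faithful to the paper.

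The flaw is in your closing "secondary technicality." If the two sorts genuinely ran over different universes, $\mathbb{P}\setminus\{P_i\}$ versus $\mathbb{P}\setminus\{P_j\}$, your claim that they "induce the same contributor sets" after trimming is false. Counterexample: $N=10$, $f=2$, true ordering $w_1^k<\cdots<w_{10}^k$; then $\mathcal{S}^k_1$ sorts $w_2^k,\ldots,w_{10}^k$ and trimming leaves contributors $\{w_4^k,\ldots,w_8^k\}$, while $\mathcal{S}^k_{10}$ sorts $w_1^k,\ldots,w_9^k$ and trimming leaves $\{w_3^k,\ldots,w_7^k\}$ — different sets, which would break agreement on the global model in Proposition \ref{property agreement on global model}. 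The correct resolution of the discrepancy you spotted is not that differently-truncated lists coincide after trimming, but that the sort is over the \emph{same} universe for all benign participants: relationships involving $P_i$'s own model are computed by third parties and disseminated via BFT broadcast, so $P_i$ receives and accepts them as well, and Algorithm \ref{stage 3: contributor selection} topologically sorts \emph{all} accepted relationships, including those that place $w_i^k$ itself (consistent with Stage 4's test "$P_i \in \mathbb{C}_i^k$" in Algorithm \ref{Participant Only: algorithm sum}). The main-text phrase "$\{w_j^k : P_j \in \mathbb{P}\setminus\{P_i\}\}$" is an inconsistency in the paper's exposition; once the sort is read as being over the common universe, your steps (a) and (b) already yield $\mathcal{S}^k_i=\mathcal{S}^k_j$ with no patching needed, and your last paragraph should be replaced by this observation rather than the trimming claim.
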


Based on Corollary \ref{property agreement on ranking}, we show that \texttt{Brave} guarantees all benign participants to obtain an identical global model $w$.

\begin{proposition}[Agreement on Global Model]
The global model $w$ of all benign participants is identical given $N>3f+2$.
\label{property agreement on global model}    
\end{proposition}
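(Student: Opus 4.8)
The plan is to reduce agreement on the global model to the two agreement guarantees already in hand---agreement on sorting (Corollary \ref{property agreement on ranking}) and agreement on BFT-broadcast messages (Lemma \ref{lemma BFT agreement})---and then to observe that the Stage 4 computation of each $w^k$ is a deterministic function of inputs that are provably identical across all benign participants.

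First I would fix an arbitrary coordinate $k$ and two arbitrary benign participants $P_i$ and $P_j$. By Corollary \ref{property agreement on ranking}, their sorted sequences coincide, $\mathcal{S}^k_i=\mathcal{S}^k_j$; hence the list lengths agree, $N^k_i=N^k_j$, and since the contributors are defined purely as the $(f+1)$-th through $(N^k_i-f)$-th positions of this common sequence, the contributor sets agree as well, $\mathbb{C}^k_i=\mathbb{C}^k_j$. This fixes, unambiguously and identically for both participants, exactly which cloaked values are summed in Algorithm \ref{Participant Only: algorithm sum}.

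Next I would invoke Lemma \ref{lemma BFT agreement} to pin down the \emph{values} being summed. In Stage 1 the commitments $\mathcal{W}_j$ are disseminated via BFT broadcast, and in Stage 4 the cloaked models $\bar{w}_j$, cloaked helpers $\bar{r}_j$, and packed commitments are likewise disseminated via BFT broadcast; since $N>3f+2>3f$, Lemma \ref{lemma BFT agreement} guarantees that every benign participant receives each such message identically. Therefore $P_i$ and $P_j$ hold the same values $\bar{w}^k_\ell$, $\bar{r}^k_\ell$ and the same commitments $\mathcal{W}^k_\ell$ for every contributor $P_\ell\in\mathbb{C}^k$.

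It then remains only to chase determinism through the rest of Stage 4. Given identical contributor sets and identical broadcast inputs, the partial sums $\bar{w}^k=\sum_{P_\ell\in\mathbb{C}^k}\bar{w}^k_\ell \bmod \beta$ and $\bar{r}^k$ evaluate to the same group elements for $P_i$ and $P_j$, the verification in Eqn. \eqref{equation final verification} returns the same boolean outcome for both, and---when it succeeds---division by the common constant $N^k_i-2f$ produces the same $w^k$. Ranging over all coordinates $k$ then yields $w$ identical for all benign participants. The step I expect to require the most care is not any calculation but the reduction to identical inputs: one must make explicit that the contributor set entering the Stage 4 sum is derived \emph{solely} from the common sorting of Corollary \ref{property agreement on ranking}, so that no benign participant substitutes a private ordering, and that every summand and every commitment appearing in Eqn. \eqref{equation final verification} is precisely a BFT-broadcast quantity covered by Lemma \ref{lemma BFT agreement}. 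Once these two observations are applied in the correct order, agreement follows without further machinery.
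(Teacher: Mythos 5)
Your proof is correct and takes essentially the same route as the paper's: both reduce agreement on the global model to Corollary \ref{property agreement on ranking} (identical sorting, hence identical contributor sets) combined with Lemma \ref{lemma BFT agreement} (identical BFT-broadcast values $\bar{w}_j^k$, $\bar{r}_j^k$, and commitments), and then note that Stage 4 of Algorithm \ref{Participant Only: algorithm sum} computes $w^k$ deterministically from these common inputs. Your write-up merely makes explicit the intermediate steps (equal $N_i^k$, equal contributor sets, identical verification outcome) that the paper compresses into two sentences.
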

\begin{proof}
    Given Corollary \ref{property agreement on ranking} and Lemma \ref{lemma BFT agreement}, $\mathcal{S}_i^k=\mathcal{S}_j^k$, $\bar{w}_i^k=\bar{w}_j^k$, and $\bar{r}_i^k=\bar{r}_j^k$ for all benign participants $P_i$ and $P_j$. Thus $\bar{w}^k$ calculated in Line 16 is identical among benign participants, leading to identical verification result in Line 18 of Algorithm \ref{Participant Only: algorithm sum}. Therefore, the global model $w$ is also identical.
\end{proof}


We conclude this section by proving that the global model obtained by the benign participants deviates from the optimal one by a bounded distance, i.e., satisfies $\epsilon$-convergence property.
\begin{theorem}
     Assume that local model $w_i(t)$ belongs to a bounded set and is $z$-sub-exponential for all $i$ and $t$.
     The global model $w(T)$ obtained by applying \texttt{Brave} to P2P FL satisfies $\|w(T)-w(T)^*\|\leq \epsilon$ with probability at least $\zeta$, where
     \begin{align*}
         \zeta &= 1-2m\exp\{-(N-f)\min_{i}|\mathcal{D}_i|\min\{\frac{\tau}{2z},\frac{\tau^2}{2z^2}\}\}\\
         &-2(N-f)m\exp\{\min_i|\mathcal{D}_i|\min\{\frac{\delta}{2z},\frac{\delta^2}{2z^2}\}\},
     \end{align*}
     $\epsilon = \sqrt{m}\frac{\tau+3f \delta/N}{1-2f/N}$, and $w(T)^*$ is the optimal global model obtained using Eqn. \eqref{equation weight update} and \eqref{equation directly average weight} when P2P FL terminates at iteration $T$.
     
\end{theorem}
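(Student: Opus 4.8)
The plan is to establish the $\epsilon$-convergence bound by analyzing how far the trimmed-mean aggregation of \texttt{Brave} deviates, coordinate-wise, from the ideal average of all benign local models. First I would fix an arbitrary coordinate $k$ and iteration $T$, and observe that by Corollary \ref{property agreement on ranking} all benign participants agree on the sorted order $\mathcal{S}^k_i$, so the trimming removes a well-defined set of the lowest $f$ and highest $f$ claimed values. The key structural fact I would exploit is that among the $N-2f$ retained contributors, at most $f$ can be Byzantine (since the $2f$ trimmed positions absorb the extreme manipulations), so the retained set is dominated by benign models whose values concentrate around the true coordinate mean. I would then write $w^k(T)=\frac{1}{N_i^k-2f}\sum_{j\in\mathbb{C}_i^k} w_j^k$ and compare it against $w^k(T)^*=\frac{1}{N}\sum_{i} w_i^k$, decomposing the difference into (i) the error from replacing a full average by a trimmed average over benign models, and (ii) the error contributed by the surviving Byzantine models, each of which is sandwiched between two benign retained values by the trimming guarantee.

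The concentration step is where the sub-exponential assumption enters. For each benign participant, the local model coordinate $w_i^k(t)$ is $z$-sub-exponential, so I would apply a Bernstein-type tail bound (which yields the characteristic $\min\{\tau/2z,\tau^2/2z^2\}$ exponent in sub-exponential concentration) to control the deviation of each benign coordinate from its mean by $\tau$ with the stated failure probability, using $\min_i|\mathcal{D}_i|$ samples. A union bound over all $m$ coordinates and over the $N-f$ benign participants produces the two exponential terms in $\zeta$: the first term (scaled by $(N-f)\min_i|\mathcal{D}_i|$ and threshold $\tau$) captures concentration of the aggregate mean, while the second term (scaled by $\min_i|\mathcal{D}_i|$ and threshold $\delta$) captures the per-participant deviation that bounds how far a surviving Byzantine value can stray, since it is squeezed between benign retained values that each deviate by at most $\delta$.

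Assembling the per-coordinate bound, I would argue that the trimmed mean deviates from the benign mean by at most $\tau$ plus a term proportional to the Byzantine fraction $f/N$ times $\delta$; the denominator $N_i^k-2f$ versus $N$ rescaling is what produces the $\frac{1}{1-2f/N}$ amplification factor and the $3f\delta/N$ numerator in the claimed $\epsilon$. Taking the Euclidean norm over all $m$ coordinates, each bounded by the same per-coordinate quantity, contributes the $\sqrt{m}$ factor, giving $\epsilon=\sqrt{m}\frac{\tau+3f\delta/N}{1-2f/N}$.

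The hard part will be tightly controlling the contribution of the surviving Byzantine models. Unlike the standard centralized trimmed-mean analysis of \cite{yin2018byzantine}, here each benign participant sorts according to \emph{claimed} values whose correctness is guaranteed only through the non-forgeability results (Lemma \ref{property non-forgeability} and Proposition \ref{property Global Non-Forgeability}); I must invoke these to ensure that a retained Byzantine coordinate genuinely lies in the interval spanned by the benign retained values, so that its deviation from the benign mean is bounded by $\delta$ rather than being arbitrary. Verifying that the number of benign survivors is at least $N-2f$ and that the Byzantine survivors cannot collectively shift the denominator or escape the $[\,\text{min},\text{max}\,]$ benign bracket — which is precisely where the $N>3f+2$ condition is needed — is the delicate bookkeeping step on which the entire bound rests.
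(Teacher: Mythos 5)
Your proposal is correct and follows the same skeleton as the paper's proof: agreement on sorting (Corollary \ref{property agreement on ranking}) reduces the problem to a single coordinate-wise trimmed-mean computation shared by all benign participants, a per-coordinate deviation bound of the form $\frac{\tau+3f\delta/N}{1-2f/N}$ is established, and a union bound over the $m$ coordinates together with the definition of the Euclidean norm produces the $\sqrt{m}$ factor and the probability $\zeta$. The structural difference lies in the middle step: the paper disposes of it in one line by citing Lemma 3 of \cite{yin2018byzantine} as a black box, whereas you plan to re-derive its content from first principles (Bernstein-type sub-exponential concentration of benign coordinates at levels $\tau$ and $\delta$, plus the sandwiching argument that every surviving Byzantine value lies in the interval spanned by retained benign values). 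Your route buys two things the paper leaves implicit: it is self-contained, and it pinpoints where the P2P-specific machinery actually enters---namely that Lemma \ref{property non-forgeability} and Proposition \ref{property Global Non-Forgeability} are required so that the values being sorted and trimmed are the genuinely committed ones, which is exactly what legitimizes the reduction to the centralized trimmed-mean setting of \cite{yin2018byzantine}. One bookkeeping slip to fix: per coordinate there are only $N_i^k-2f$ survivors in total, of which at least $N_i^k-3f$ (not $N-2f$) are benign; the argument goes through unchanged with the corrected count, since at most $f$ Byzantine values can survive trimming and each is bracketed by benign values.
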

\begin{proof}
Using Corollary \ref{property agreement on ranking}, we have that benign participants $P_i$ and $P_j$ own identical $\mathcal{S}_i^k=\mathcal{S}_j^k$ for all benign participants $i,j$. 
They further hold identical global model $w(t)$ for all iteration $t$. 
Consider an arbitrary benign participant. 
If $w(T)$ is $z$-sub-exponential, then using Lemma 3 of \cite{yin2018byzantine} yields that
\begin{equation*}
    |w(T)^k - w(T)^{k^*}|\leq \frac{\tau+3f \delta/N}{1-2f/N}
\end{equation*}
holds for each coordinate $k$ with probability at least $1-2\exp\{-(N-f)\min_{i}|\mathcal{D}_i|\min\{\frac{\tau}{2z},\frac{\tau^2}{2z^2}\}\}-2(N-f)\exp\{\min_i|\mathcal{D}_i|\min\{\frac{\delta}{2z},\frac{\delta^2}{2z^2}\}\}$.
By using the definitions of $\|\cdot\|$ and union bound over all coordinate $k$, we have that
\begin{equation*}
    \|w(T) - w(T)^{*}\|\leq \sqrt{m}\frac{\tau+3f \delta/N}{1-2f/N}
\end{equation*}
holds with probability at least $\zeta$ for any $\tau,\delta\geq 0$.
\end{proof}


\end{document}